\documentclass[sigconf]{acmart}

\usepackage{mathtools}

\usepackage{placeins}
\usepackage{dblfloatfix}
\usepackage{multirow}
\usepackage{subcaption}
\usepackage{booktabs}

\usepackage{algorithm}
\usepackage{enumitem}
\usepackage{algpseudocode}
\usepackage[capitalize,noabbrev]{cleveref}
\usepackage{pifont}  
\usepackage{xcolor}  
\usepackage{tcolorbox}  
\AtBeginDocument{%
  }

\setcopyright{acmlicensed}
\copyrightyear{2026}
\acmYear{2026}
\acmDOI{XXXXXXX.XXXXXXX}
\acmConference[KDD '26]{Proceedings of the 32nd ACM SIGKDD Conference on Knowledge Discovery and Data Mining}{August 9--13, 2026}{Jeju, Korea}
\acmBooktitle{Proceedings of the 32nd ACM SIGKDD Conference on Knowledge Discovery and Data Mining (KDD '26), August 9--13, 2026, Jeju, Korea}
\acmISBN{978-1-4503-XXXX-X/2026/08}

\begin{document}

\title{DenoiseFlow: Uncertainty-Aware Denoising for Reliable LLM Agentic Workflows}

\author{Yandong Yan}
\email{ai_yan@stu.pku.edu.cn}
\affiliation{%
  \institution{School of Computer Science, Peking University}
  \city{Beijing}
  \country{China}
}

\author{Junwei Peng}
\email{junweipeng@stu.pku.edu.cn}
\affiliation{%
  \institution{School of Electronics Engineering and Computer Science, Peking University}
  \city{Beijing}
  \country{China}
}

\author{Shijie Li} 
\email{lisj6@im.csg}
\affiliation{%
 \institution{China Southern Power Grid}
 \city{Guangzhou}
 \country{China}
}

\author{Chenxi Li}
\email{chenxi.crystal@foxmail.com}
\affiliation{%
  \institution{SKLCCSE, School of Computer Science and Engineering, Beihang University}
  \city{Beijing}
  \country{China}
  }

\author{Yifei Shang}
\email{syf@stu.pku.edu.cn}
\affiliation{%
  \institution{School of Electronics Engineering and Computer Science, Peking University}
  \city{Beijing}
  \country{China}
  }

\author{Can Deng}
\email{dengcan@mails.tsinghua.edu.cn}
\affiliation{%
  \institution{Tsinghua University}
  \city{Beijing}
  \country{China}
  }

\author{Ruiting Dai}
\email{rtdai@uestc.edu.cn}
\affiliation{%
  \institution{University of Electronic Science and Technology of China}
  \city{Chengdu}
  \country{China}
  }

\author{Yongqiang Zhao}
\email{yongqiangzhao@stu.pku.edu.cn}
\affiliation{%
  \institution{Key Laboratory of High Confidence Software Technologies(PKU), MOE}
  \city{Beijing}
  \country{China}
  }

\author{Jiaqi Zhu} 
\authornote{Corresponding author.}
\email{zhujq@ios.ac.cn}
\affiliation{%
  \institution{Institute of Software, Chinese Academy of Sciences}
  \city{Beijing}
  \country{China}
  }

\author{Yu Huang}
\authornotemark[1]
\email{hy@pku.edu.cn}
\affiliation{%
  \institution{National Engineering Research Center for Software Engineering, Peking University}
  \city{Beijing}
  \country{China}
  }

\renewcommand{\shortauthors}{Trovato et al.}

\begin{abstract} 
Autonomous agents are increasingly entrusted with complex, long-horizon tasks, ranging from mathematical reasoning to software generation. While agentic workflows facilitate these tasks by decomposing them into multi-step reasoning chains, reliability degrades significantly as the sequence lengthens. Specifically, minor interpretation errors in natural-language instructions tend to compound silently across steps. We term this failure mode \textit{accumulated semantic ambiguity}. Existing approaches to mitigate this often lack runtime adaptivity, relying instead on static exploration budgets, reactive error recovery, or single-path execution that ignores uncertainty entirely.
We formalize the multi-step reasoning process as a \textit{Noisy MDP} and propose \textsc{DenoiseFlow}, a closed-loop framework that performs progressive denoising through three coordinated stages: (1)~\textit{Sensing} estimates per-step semantic uncertainty; (2)~\textit{Regulating} adaptively allocates computation by routing between fast single-path execution and parallel exploration based on estimated risk; and (3)~\textit{Correcting} performs targeted recovery via influence-based root-cause localization. Online self-calibration continuously aligns decision boundaries with verifier feedback, requiring no ground-truth labels.
Experiments on six benchmarks spanning mathematical reasoning, code generation, and multi-hop QA show that \textsc{DenoiseFlow} achieves the highest accuracy on every benchmark (83.3\% average, +1.3\% over the strongest baseline) while reducing cost by 40--56\% through adaptive branching. Detailed ablation studies further confirm framework-level's robustness and generality. Code is available at \url{https://anonymous.4open.science/r/DenoiseFlow-21D3/}. 
\end{abstract}

\begin{CCSXML}
<ccs2012>
 <concept>
  <concept_id>10010147.10010178</concept_id>
  <concept_desc>Computing methodologies~Artificial intelligence</concept_desc>
  <concept_significance>500</concept_significance>
 </concept>
 <concept>
  <concept_id>10010147.10010178.10010179</concept_id>
  <concept_desc>Computing methodologies~Natural language processing</concept_desc>
  <concept_significance>500</concept_significance>
 </concept>
 <concept>
  <concept_id>10010147.10010178.10010187</concept_id>
  <concept_desc>Computing methodologies~Reasoning about belief and knowledge</concept_desc>
  <concept_significance>300</concept_significance>
 </concept>
</ccs2012>
\end{CCSXML}

\ccsdesc[500]{Computing methodologies~Artificial intelligence}
\ccsdesc[500]{Computing methodologies~Natural language processing}
\ccsdesc[300]{Computing methodologies~Reasoning about belief and knowledge}

\keywords{Large Language Models, Agentic Workflows, Uncertainty Quantification, Self-Calibration, Reasoning under Uncertainty}


\maketitle

\section{Introduction}
\label{sec:introduction}
Complex workflow automation aims to translate natural language requirements into precise execution sequences across massive and heterogeneous data repositories~\cite{shen2024hugginggpt, liang2023taskmatrix, xi2023rise}, for applications including data analysis~\cite{fang2024largelanguagemodelsllmstabular, li2023sheetcopilot} and decision support~\cite{Tupayachi_2024}. LLM-based agents drive these workflows via multi-step reasoning and tool invocation~\cite{yao2022react, schick2024toolformer}; however, as the reasoning depth extends, \textit{accumulated semantic ambiguity} inevitably triggers error cascades~\cite{dziri2024faith, zhang2023snowballing}, rendering the reliability of long-horizon chains the primary bottleneck for large-scale data insights.

The landscape of LLM-based workflow automation bifurcates into heuristic interaction patterns and automated architecture optimization. The former, exemplified by ReAct~\cite{yao2022react}, interleaves reasoning with action execution to enable dynamic planning, while Reflexion~\cite{shinn2023reflexion} augments this via verbal reinforcement to induce self-correction through trajectory critique. To mitigate individual cognitive bias, DatawiseAgent~\cite{you2025datawiseagent} scales this paradigm to multi-agent collaboration, filtering hallucinations through role-based peer review. Conversely, optimization frameworks shift focus to offline structure search: DSPy~\cite{khattab2023dspy} abstracts workflows into programmable modules with learnable prompt parameters, and AFlow~\cite{zhang2024aflow} utilizes Monte Carlo Tree Search to discover and freeze task-optimal execution graphs.


Despite progress in stabilizing long-horizon reasoning chains, current paradigms remain fundamentally predicated on static execution graphs, lacking the runtime adaptability to intercept semantic ambiguity before it cascades into irreversible failures. In practice, these approaches operate via reactive error correction, where intervention is contingent upon explicit, post-hoc signals (e.g., code exceptions in Self-Refine~\cite{madaan2023self}). This reactive paradigm leaves agents vulnerable to \textbf{logical soft errors}—covert deviations that degrade reasoning quality without triggering immediate crashes. The resulting inability to counteract the real-time accumulation of semantic ambiguity motivates the core question: \textit{How can an LLM-based agent transform from a passive executor of static plans into a closed-loop regulator capable of active runtime denoising?}

To address this problem, we recast long-horizon workflow automation as a stochastic control process within a Noisy Markov Decision Process (Noisy MDP), where reasoning steps are modeled as stochastic state transitions rather than fixed deterministic instructions. This stochastic representation enables dynamic intervention against error accumulation, fundamentally shifting the paradigm from passive execution to active denoising. To this end, we propose \textsc{DenoiseFlow}, a closed-loop framework designed to minimize semantic divergence through uncertainty-aware progressive denoising.

Concretely, \textsc{DenoiseFlow} operates through three coordinated stages: (1) a \textbf{Sensing} stage that quantifies state uncertainty and models its propagation across the graph; (2) a \textbf{Regulating} stage that optimizes computational allocation by dynamically switching between fast execution for low-entropy nodes and branching exploration for ambiguous ones; and (3) a \textbf{Correcting} stage that performs influence-based root-cause localization to identify and correct the source of error without global restarts. To ensure sustained adaptation, \textsc{DenoiseFlow} incorporates online self-calibration, continuously aligning uncertainty thresholds with verifier feedback to remain robust against shifting data distributions.

Our contributions are as follows:

\begin{itemize}[leftmargin=*,itemsep=2pt]
    \item We propose the Noisy MDP formulation to recast workflow automation as a Markov stochastic control process, establishing \textsc{DenoiseFlow}—which orchestrates Sensing, Regulating, and Correcting stages—to fundamentally shift the paradigm from fragile open-loop instruction following to resilient closed-loop denoising.

    \item We devise an uncertainty-aware allocation strategy (within the Sensing and Regulating stages) that utilizes entropy estimates to optimally switch between fast execution and targeted branching exploration for efficient resource usage.

    \item We introduce influence-based root-cause localization to correct the source of error over dependency graphs, complemented by online self-calibration to continuously align decision boundaries with verifier feedback for robust adaptation.

    \item Extensive experiments on six benchmarks spanning mathematical reasoning, code generation, and multi-hop QA demonstrate that \textsc{DenoiseFlow} achieves state-of-the-art performance across all task categories while reducing computational cost by 40--56\% compared to fixed exploration strategies.
\end{itemize}
\section{Related Work}
\label{sec:Related}

\noindent\textbf{Agentic Workflows and Semantic Uncertainty.}
Agentic workflows and autonomous agents are two complementary paradigms for applying LLMs to complex tasks~\cite{xi2025rise}. Agentic workflows execute tasks via multi-step processes that are often engineered from human domain knowledge and refined iteratively~\cite{wei2022chain}, while autonomous agents reason and act in open-ended environments with flexible decision policies~\cite{park2023generative}. Existing workflows span general-purpose patterns for reasoning and tool use~\cite{yao2022react} and domain-specific pipelines for code generation~\cite{yang2024swe}, data analysis~\cite{hong2025data}, mathematics~\cite{dyer2022minerva}, and question answering~\cite{khattab2023dspy}. These systems show strong performance, but typically assume that natural-language instructions are sufficiently clear: semantic ambiguities are rarely made explicit, and there is little modeling of how such ambiguity accumulates and propagates through long-horizon workflows. Semantic entropy~\cite{kuhn2023semantic} clusters LLM outputs into equivalence classes for single-turn hallucination detection, yet does not address multi-step propagation or prescribe execution-time actions. Our work focuses on \emph{semantic uncertainty}---the diversity of plausible interpretations induced by underspecified instructions---and how it should be sensed, propagated, and controlled during execution.

\noindent\textbf{Automated Workflow Optimization and Control.}
Recent work automates aspects of agentic workflows through prompt optimization~\cite{zhou2022large} or hyperparameter tuning~\cite{yang2023large}, which improves performance but still relies on manually specified structures and does not reason about uncertainty propagation. Structural approaches search over graphs, programs, or modular building blocks to discover effective workflows~\cite{hu2025adas}: AFlow~\cite{zhang2024aflow} uses Monte Carlo Tree Search to discover effective workflow structures offline, MermaidFlow~\cite{zheng2025mermaidflow} employs evolutionary search, and JudgeFlow~\cite{ma2026judgeflow} introduces block-level blame attribution to guide the optimizer toward the most problematic component. These methods achieve strong offline performance but largely treat execution as a deterministic black box---the workflow structure is fixed after optimization, unable to adapt to individual problem characteristics at runtime. \textsc{DenoiseFlow} is complementary: given an initial workflow (potentially discovered by such methods), it targets \emph{reliable execution} under semantic ambiguity by using semantic entropy and dependency structure to quantify and propagate uncertainty online.

\noindent\textbf{Execution Strategies and Error Recovery.}
Single-shot execution ignores uncertainty entirely, while fixed branching strategies such as Tree-of-Thoughts~\cite{yao2023tree} and Self-Consistency~\cite{wang2022self} allocate exploration uniformly, wasting computation on high-confidence cases and under-exploring ambiguous ones. Error-recovery methods like Reflexion~\cite{shinn2023reflexion} improve robustness via reflection but often regenerate large parts of the plan, discarding informative intermediate states; localized repair in code or tool-use agents~\cite{madaan2023self} rarely exploits explicit semantic dependencies. Calibration methods~\cite{guo2017calibration,platt1999probabilistic} adjust LLM confidence scores but require held-out labeled data, limiting their applicability during autonomous execution. In contrast, \textsc{DenoiseFlow} adapts exploration to each problem's confidence landscape, performs slot-level root-cause tracing over a dependency graph, and maintains \textit{label-free} online calibration via verifier feedback---converting open-loop execution into closed-loop denoising. A detailed capability comparison is provided in Table~\ref{tab:method_comparison} (Appendix~\ref{app:method_comparison}).

\begin{figure*}[t]
    \centering
    \includegraphics[width=0.95\linewidth]{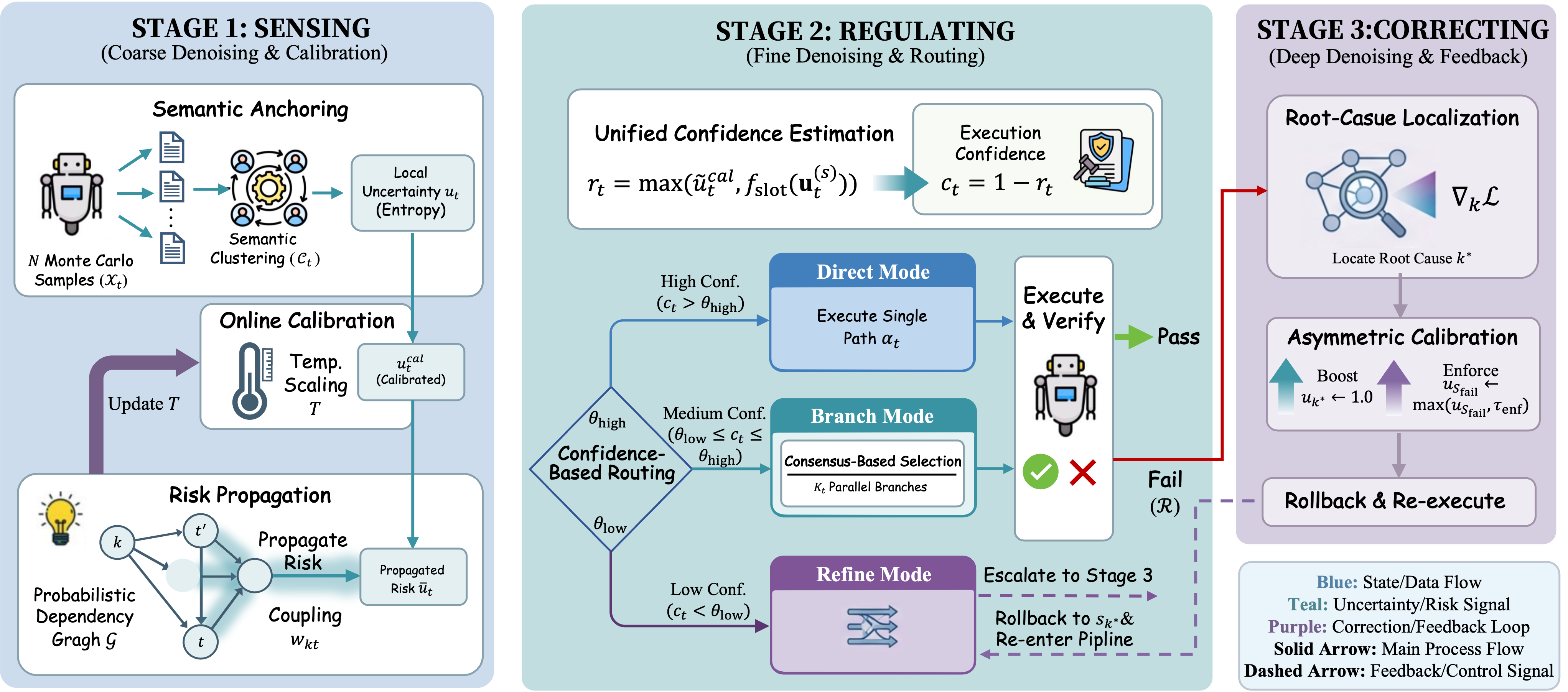}
    \Description{DenoiseFlow framework diagram showing three stages: Stage 1 (Sensing) with Monte Carlo sampling and uncertainty estimation, Stage 2 (Regulating) with Direct/Branch/Refine routing based on confidence, and Stage 3 (Correcting) with influence-based tracing for root cause localization.}
    \caption{\textsc{DenoiseFlow} architecture overview. Given a problem, \textbf{Stage~1 (Sensing)} generates $N$ Monte Carlo samples to estimate semantic uncertainty $u_t$ via clustering-based entropy, then propagates risk through the dependency graph to obtain $\tilde{u}_t$. \textbf{Stage~2 (Regulating)} computes execution confidence $c_t = 1 - r_t$ and routes to one of three modes: \textit{Direct} (high confidence, single path), \textit{Branch} (medium confidence, $K$ parallel paths with consensus selection), or \textit{Refine} (low confidence, trigger Stage~3). \textbf{Stage~3 (Correcting)} performs influence-based tracing to localize the root cause $k^*$, applies asymmetric calibration to boost uncertainty at $k^*$, and re-executes from the corrected state. The online calibration module continuously adjusts temperature $T$ based on verifier feedback.}
    \label{fig:framework}
\end{figure*}

\section{Methodology}

\subsection{Problem Formulation}
\label{sec:formulation}

To counteract accumulated semantic ambiguity and achieve reliable LLM workflows, we model long-horizon agent interaction as a \textit{Noisy MDP} $\mathcal{M} = \langle \mathcal{S}, \mathcal{A}, \mathcal{T}, \mathcal{R}, \xi \rangle$. Here, $s_t \in \mathcal{S}$ denotes the workflow state (encoding history, context, and intermediate artifacts), $a_t \in \mathcal{A}$ is an instruction or tool invocation, $\mathcal{T}$ is the transition function with stochasticity induced by the semantic noise process $\xi_t$, and $\mathcal{R}$ is derived from the success specification (e.g., task constraints or verifier signals). Transitions are perturbed by $\xi_t$, i.e., $s_{t+1} \sim \mathcal{T}(\cdot \mid s_t, a_t; \xi_t)$, causing execution trajectories to progressively drift from the intended task specification.

\noindent\textbf{Noise Characterization.}
We treat $\xi_t$ as an abstract noise process representing semantic ambiguity in LLM outputs. Since LLM-induced uncertainty arises from discrete token sampling over a complex, task-dependent output space, we adopt a \textit{distribution-free} approach: we empirically estimate the noise magnitude via Monte Carlo sampling and semantic clustering (Section~\ref{sec:coarse}), obtaining a bounded surrogate $u_t \in [0,1]$ for the latent noise intensity. This non-parametric treatment aligns with the robust control paradigm, where empirical bounds replace distributional assumptions when the true noise model is unknown.

\noindent\textbf{Accumulated Semantic Divergence.}
We quantify reliability gaps via \textit{Accumulated Semantic Divergence} $\Delta_t$, which evolves according to:
\begin{equation}
    \Delta_t = \epsilon_t + \sum_{k \in \mathrm{Pred}(t)} \Phi(s_t, s_k) \, \Delta_k,
    \label{eq:divergence}
\end{equation}
where $\epsilon_t \in [0,1]$ is the local noise at step $t$, $\mathrm{Pred}(t)$ is the set of predecessors explicitly referenced by step $t$ in its context or parameters, and $\Phi(s_t, s_k) \in [0,1]$ captures the semantic coupling strength from $k$ to $t$. Since $\Phi \in [0,1]$, each step attenuates rather than amplifies individual upstream contributions; however, the summation over $\mathrm{Pred}(t)$ means that $\Delta_t$ can still grow with the depth and width of the dependency graph (see Appendix~\S\ref{app:theory} for the formal propagation bound). In practice, even moderate $\epsilon_t$ compounds along dependency chains, progressively degrading workflow reliability. We therefore seek a closed-loop policy $\pi(a_t \mid s_t, \hat{\Delta}_t)$ that maximizes task reliability under a fixed inference budget:
\begin{equation}
    \max_\pi\; \mathbb{E}_\pi \big[\mathcal{R}(\tau)\big] \quad \text{s.t.} \quad \mathrm{Usage}(\tau) \le C,
    \label{eq:objective}
\end{equation}
where $\mathcal{R}(\tau)$ is the task success indicator for trajectory $\tau$, and $\mathrm{Usage}(\tau)$ is measured by compute proxies (e.g., LLM calls/tokens and verifier invocations). This requires an online mechanism to estimate the latent divergence $\Delta_t$ via a surrogate $\hat{\Delta}_t$ and actively suppress it during execution.

\subsection{Framework Overview}
\label{sec:framework}
To suppress the unobservable $\Delta_t$, we propose \textsc{DenoiseFlow}, a closed-loop architecture that implements \textbf{Progressive Denoising} across three stages (Figure~\ref{fig:framework}). \textbf{Stage~1: Sensing} (\S\ref{sec:coarse}) acts as a \textit{state estimator}, constructing an observable surrogate $\hat{\Delta}_t$ by estimating local ambiguity and recovering dependency couplings. \textbf{Stage~2: Regulating} (\S\ref{sec:fine}) acts as a \textit{risk-sensitive controller}, increasing branching and verification only for uncertain steps. \textbf{Stage~3: Correcting} (\S\ref{sec:deep}) closes the loop: upon verification failure, it localizes the root cause $k^*$ and applies targeted correction to reshape subsequent risk estimates. Two design principles cut across all three stages: \emph{budget-aware execution}, which allocates computation proportionally to estimated uncertainty subject to $C$ (Eq.~\ref{eq:objective}); and \emph{online self-calibration}, which adapts uncertainty thresholds via verifier feedback without ground-truth labels, ensuring effective resource allocation across diverse task distributions.

\subsection{Sensing: Online Semantic Anchoring}
\label{sec:coarse}

The accumulation dynamics rely on latent variables, local noise $\epsilon_t$ and coupling coefficients $\Phi$, that remain unobservable during open-loop execution. To enable the closed-loop policy $\pi(a_t \mid s_t, \hat{\Delta}_t)$, this stage functions as a \textit{state estimator}: it constructs the observable proxy $\hat{\Delta}_t$ by quantifying local ambiguity via Monte Carlo sampling and modeling its propagation through the dependency graph.

\paragraph{Semantic Anchoring (Estimating $\epsilon_t$).}
We generate $N$ Monte Carlo samples $\mathcal{X}_t = \{x_t^{(1)}, \dots, x_t^{(N)}\}$ in parallel (concurrent API calls) and cluster them into semantic interpretations $\mathcal{C}_t$. The Normalized Semantic Entropy serves as the observable proxy $u_t$ for local noise:
\begin{equation}
    u_t = \frac{-\sum_{c \in \mathcal{C}_t} \hat{p}(c) \log \hat{p}(c)}{\log N} \in [0, 1],
    \label{eq:entropy}
\end{equation}
where $\hat{p}(c) = |c|/N$. A high $u_t$ indicates significant divergence in interpretation, serving as a direct proxy for $\epsilon_t$. We form $\mathcal{C}_t$ by agglomerative clustering with a cosine-similarity threshold $\tau_{\text{sim}} = 0.85 - 0.05 \cdot \min(n_{\text{step}}, 3)$, where $n_{\text{step}}$ is the number of reasoning steps inferred from the problem via lightweight feature extraction (Appendix~\S\ref{app:features}). This yields $\tau_{\text{sim}} \in [0.70, 0.85]$, relaxing the threshold for multi-step problems to permit more interpretation diversity while keeping it strict for simple tasks. We similarly compute slot-level uncertainty $u_t^{(s)}$ for critical parameters. Note that semantic clustering is used solely for \textit{uncertainty estimation} (guiding execution strategy), not for final correctness verification---for code tasks, we verify via unit test execution; for math tasks, via numerical answer extraction.

\paragraph{Probabilistic Graphing (Recovering $\Phi$).}
To reconstruct the dependency structure $\mathrm{Pred}(t)$ robustly, we build a \textit{Probabilistic Dependency Graph} $\mathcal{G}$. We define $\mathrm{Pred}(t)$ as the set of upstream steps whose intermediate outputs are referenced by step $t$. For each edge $k\to t$, we estimate an activation probability $p_{kt}\in[0,1]$ as the frequency of observing such a reference across the $N$ sampled rollouts. Combined with a semantic compatibility term $\gamma_{kt}\in[0,1]$ (cosine similarity between the embedding of the consumed input at $t$ and the produced output at $k$), we derive the Effective Coupling Coefficient:
\begin{equation}
    w_{kt} = p_{kt} \cdot \gamma_{kt},
    \label{eq:coupling}
\end{equation}
which serves as the computational proxy for $\Phi(s_t, s_k)$. Intuitively, risk should propagate strongly only when two conditions hold: the dependency is \textit{structurally stable} (high $p_{kt}$) and the content is \textit{semantically coupled} (high $\gamma_{kt}$). The product naturally enforces this ``AND'' semantics, attenuating propagation when either condition is weak.

\paragraph{Risk Propagation and Online Calibration.}
We instantiate the accumulated divergence surrogate $\hat{\Delta}_t$ as a propagated risk signal $\tilde{u}_t$ via a dual-channel recurrence:
\begin{align}
    \tilde{u}_t \!=\!\operatorname{clip}_{[0,1]} \Bigg(\! 
        u_t \!+\! \lambda \!\cdot\!\!\! \max_{k \in \mathrm{Pred}(t)} (w_{kt}\tilde{u}_k) \!+\!
 \beta \!\cdot\! \mathbb{E}_{k \in \mathrm{Pred}(t)} \left[ w_{kt}\tilde{u}_k \right]\!\! \Bigg).
    \label{eq:propagation}
\end{align}
Here, the Bottleneck channel ($\lambda$) captures critical-path risk where a single high-uncertainty ancestor dominates, while the Aggregation channel ($\beta$) captures cumulative degradation from many mildly noisy predecessors. We use $\lambda{=}0.5,\; \beta{=}0.3$ by default (see Appendix~\S\ref{app:hyperparams} for sensitivity analysis). In the base case where $\mathrm{Pred}(t) = \emptyset$, the propagation terms vanish, yielding $\tilde{u}_t = u_t$.

However, raw entropy estimates may be miscalibrated, being overconfident on ambiguous problems or underconfident on clear ones. We therefore apply temperature scaling \emph{before} propagation, so that both local and propagated uncertainties benefit from calibrated inputs. For any raw uncertainty $u$, the calibrated value is:
\begin{equation}
    u^{\text{cal}} = \sigma\big((u - 0.5) / T\big),
    \label{eq:temp_scaling}
\end{equation}
where $\sigma(\cdot)$ is the sigmoid function and $T$ is the calibration temperature. After every $\Delta$ observations, we update $T$ based on verifier feedback:
\begin{equation}
    T \leftarrow T \cdot \begin{cases}
    1.1 & \text{if } \text{Acc}_{u<\tau_l} < \theta_{\text{acc}} \\
    0.9 & \text{if } \text{Acc}_{u>\tau_h} > \theta_{\text{acc}} \\
    1.0 & \text{otherwise}
    \end{cases}
    \label{eq:temp_update}
\end{equation}
where $\text{Acc}_{u<\tau}$ denotes the verifier pass rate on problems with uncertainty below $\tau$. If low-uncertainty problems frequently fail verification, we increase $T$ to correct overconfidence; conversely, if high-uncertainty problems frequently pass, we decrease $T$ to correct underconfidence. This self-calibration loop enables the system to adapt to distribution shifts across task domains without ground-truth labels.

\subsection{Regulating: Budget-Aware Adaptive Branching}
\label{sec:fine}

This stage resolves the tension between efficiency and robustness. Uniform branching is wasteful ($O(K)$ cost for every step), while single-path execution cannot mitigate uncertain decisions. We therefore branch only when confidence is low, routing each step to an appropriate execution mode based on its estimated risk.

\paragraph{Confidence-Based Routing.}
We synthesize a unified risk metric $r_t$ that combines the propagated systemic risk $\tilde{u}_t^{\text{cal}}$ (from \S\ref{sec:coarse}) with local slot-level ambiguities. If any individual parameter slot exhibits high entropy, or if multiple slots are simultaneously uncertain, the compound risk dominates:
\begin{equation}
    r_t = \max\!\big(\tilde{u}_t^{\text{cal}},\; f_{\text{slot}}(\mathbf{u}_t^{(s)})\big),
    \label{eq:risk_metric}
\end{equation}
where $f_{\text{slot}}$ aggregates slot-level uncertainties $\mathbf{u}_t^{(s)}$ via a weighted combination of worst-case and average entropy over high-uncertainty slots (details in Appendix~\S\ref{app:slot_risk}). The Execution Confidence $c_t = 1 - r_t$ partitions the policy into three regimes:
\begin{equation}
    \pi(a_t) =
    \begin{cases}
        a^{\mathrm{greedy}}      & \text{if } c_t > \theta_{\mathrm{high}} \quad \text{(Direct)} \\
        \{a^{(k)}\}_{k=1}^{K_t}  & \text{if } \theta_{\mathrm{low}} \leq c_t \leq \theta_{\mathrm{high}} \quad \text{(Branch)} \\
        \text{Escalate}           & \text{if } c_t < \theta_{\mathrm{low}} \quad \text{(Refine)}
    \end{cases}
    \label{eq:policy}
\end{equation}
Thresholds $\theta_{\mathrm{high}}$ and $\theta_{\mathrm{low}}$ are set adaptively from the running confidence distribution: $\theta_{\mathrm{high}} = Q_3(\mathbf{c})$ and $\theta_{\mathrm{low}} = Q_1(\mathbf{c}) - 0.5\cdot\mathrm{IQR}(\mathbf{c})$, with fallback $(0.7, 0.3)$ when fewer than 10 problems have been processed. In the Branching regime, $\{a^{(k)}\}$ are medoid actions---the most central member of each of the top-$K_t$ semantic clusters $\mathcal{C}_t$ (\S\ref{sec:coarse})---ensuring semantic diversity without synthesizing artificial responses. The branch count scales continuously with uncertainty: $K_t = \min(\lceil \kappa(1{-}c_t)\rceil, K_{\max})$. Importantly, branching reuses the Monte Carlo candidates from Stage~1; the incremental cost is limited to executing and verifying additional candidates.

\paragraph{Consensus-Based Selection.}
When executing $K_t$ branches in parallel, we select a representative output via semantic consensus. Branch outputs are embedded, clustered into groups $\mathcal{C}$, and scored:
\begin{equation}
    \mathrm{Score}(C)=\eta \cdot \mathrm{Valid}(C) + (1{-}\eta)\cdot \mathrm{Cohesion}(C) + \log|C|,
\end{equation}
where $\mathrm{Valid}(C) \in [0,1]$ is the fraction of verifier-passing outputs, $\mathrm{Cohesion}(C) \in [0,1]$ is the mean intra-cluster cosine similarity, and $\log|C|$ is a size bonus favoring clusters with broader agreement. We select $C^*=\arg\max_C \mathrm{Score}(C)$ and return $\mathrm{Medoid}(C^*)$. If no valid cluster exists, execution escalates to Stage~3: Correcting (\S\ref{sec:deep}).

\subsection{Correcting: Closed-Loop Refinement}
\label{sec:deep}

When execution enters the Refine regime (\S\ref{sec:fine}) or fails verification, accumulated divergence $\Delta_t$ has exceeded the correction capacity of local branching. Rather than blind retries that discard informative intermediate states, this stage functions as a feedback loop: it traces influence through the dependency graph to localize root causes and injects corrective signals for targeted re-execution.

\paragraph{Influence-Based Root-Cause Localization.}
We formalize root-cause localization as identifying nodes with high \textit{influence} on the failure. Operating on the discrete dependency graph $\mathcal{G}$, we approximate the influence of an upstream node $k$ on the failure set $S_{\mathrm{fail}}$ via chain-rule-style propagation over graph edges, analogous to backpropagation in neural networks but over symbolic dependency structures:
\begin{equation}
    I_k \approx \tilde{u}_k \cdot \max_{\rho \in k \rightsquigarrow S_{\mathrm{fail}}} \left( \prod_{(i,j) \in \rho} w_{ij} \right),
    \label{eq:influence}
\end{equation}
where $\tilde{u}_k$ is the propagated risk at node $k$ and the maximization identifies the most active dependency path $\rho$ transmitting error signals. The root cause is localized as $k^* = \operatorname{argmax}_k I_k$---the node maximizing the product of inherent ambiguity and transmission bandwidth. To enable targeted correction, we apply \textbf{Asymmetric Calibration} that deterministically boosts uncertainty at the identified root cause:
\begin{equation}
\begin{split}
    u_{k^*} &\leftarrow 1.0 \quad (\textbf{Boost}) \\
    u_{S_{\mathrm{fail}}} &\leftarrow \max\big(u_{S_{\mathrm{fail}}},\, \tau_{\mathrm{enf}}\big) \quad (\textbf{Enforce})
\end{split}
\label{eq:calibration}
\end{equation}
This increases the estimated uncertainty at the inferred root cause $k^*$ to its maximum value, forcing the control policy (Eq.~\eqref{eq:policy}) to shift from Direct to Branching at that node, thereby enabling targeted local re-generation. Setting $u_{k^*} = 1.0$ reflects a deliberate ``explore-on-doubt'' design: since the node has already contributed to a verified failure, aggressive re-exploration incurs less waste than a missed root cause would. We additionally enforce a minimum sensitivity $\tau_{\mathrm{enf}}$ at the failure set to prevent premature acceptance of downstream results. After calibration, execution rolls back to $s_{k^*}$ and re-enters the pipeline with updated uncertainty estimates. To guarantee termination, we cap the number of refinement cycles at $R$ (default $R{=}2$); if the budget $C$ is exhausted or $R$ retries fail, the system returns the best result seen so far.

\begin{algorithm}[t]
\caption{\textsc{DenoiseFlow} Execution Pipeline}
\label{alg:denoiseflow}

\begin{algorithmic}[1]
    \Require Problem $P$, Budget $C$, Base LLM $\mathcal{M}$, Task Config $\mathcal{C}$, Horizon $H$
    \State Initialize state $s_0$, graph $\mathcal{G} \gets \emptyset$, risk $u \gets \mathbf{0}$, temp $T \gets 1.0$
    \While{$t < H$ \textbf{and not} Solved \textbf{and} $\mathrm{Usage} < C$}
        \State \textcolor{blue}{\textit{// Stage 1: Sensing}}
        \State Generate Monte Carlo samples $\mathcal{X}_t$ from $\mathcal{M}(s_t)$
        \State Compute $u_t$ (Eq.~\ref{eq:entropy}), calibrate $u_t^{\text{cal}}$ (Eq.~\ref{eq:temp_scaling})
        \State Update $\mathcal{G}$, propagate risk $\tilde{u}_t$ (Eq.~\ref{eq:propagation})

        \State \textcolor{blue}{\textit{// Stage 2: Regulating}}
        \State Compute confidence $c_t \gets 1 - r_t$ (Eq.~\ref{eq:risk_metric})
        \State $\mathrm{needCorrect} \gets \mathrm{false}$
        \If{$c_t > \theta_{\mathrm{high}}$}
            \State $\mathrm{result} \gets$ \textbf{Direct:} greedy decode from $\mathcal{M}(s_t)$
        \ElsIf{$c_t \ge \theta_{\mathrm{low}}$}
            \State $\mathrm{result} \gets$ \textbf{Branch:} $\mathrm{Consensus}(\mathcal{X}_t, K_t)$
        \Else
            \State $\mathrm{needCorrect} \gets \mathrm{true}$ \Comment{Escalate to Stage 3}
        \EndIf

        \State \textcolor{blue}{\textit{// Stage 3: Correcting}}
        \If{\textbf{not} $\mathrm{needCorrect}$}
            \State Verify $\mathrm{result}$ via task-specific verifier
            \State $\mathrm{needCorrect} \gets$ verification failed
        \EndIf
        \If{$\mathrm{needCorrect}$}
            \State $k^* \gets \operatorname{argmax}_k I_k$ (Eq.~\ref{eq:influence})
            \State Asymmetric Calibration (Eq.~\ref{eq:calibration}), rollback to $s_{k^*}$
        \Else
            \State $s_{t+1} \gets \mathrm{Transition}(s_t, \mathrm{result})$;\ $t \gets t+1$
        \EndIf

        \If{$t \bmod \Delta = 0$} \Comment{Online Calibration}
            \State Update temperature $T$ via verifier history (Eq.~\ref{eq:temp_update})
        \EndIf
    \EndWhile
    \State \Return Final result
\end{algorithmic}
\end{algorithm}

\section{Experiments}
\label{sec:experiments}

We evaluate \textsc{DenoiseFlow} on diverse reasoning and code generation tasks. Our evaluation emphasizes reliability under semantic uncertainty, measured by task success, robustness, and cost under matched inference budgets.

\begin{table*}[t]
\centering
\small
\renewcommand{\arraystretch}{1.10}
\setlength{\tabcolsep}{3.2pt}
\begin{tabular}{@{}l l ccc cccc cccccc | c@{}}
\toprule
\midrule
& & \multicolumn{3}{c}{\textit{Single-agent}} & \multicolumn{4}{c}{\textit{Hand-crafted Multi-agent}} & \multicolumn{6}{c|}{\textit{Autonomous Multi-agent}} & \\
\cmidrule(lr){3-5} \cmidrule(lr){6-9} \cmidrule(lr){10-15}
Category & Benchmark
  & \rotatebox{0}{IO}
  & \rotatebox{0}{CoT}
  & \rotatebox{0}{CoT SC}
  & \rotatebox{0}{Self-Ref.}
  & \rotatebox{0}{Debate}
  & \rotatebox{0}{Blender}
  & \rotatebox{0}{DyLAN}
  & \rotatebox{0}{GPTSwarm}
  & \rotatebox{0}{ADAS}
  & \rotatebox{0}{AFlow}
  & \rotatebox{0}{MaAS}
  & \rotatebox{0}{Mermaid.}
  & \rotatebox{0}{Judge.}
  & \rotatebox{0}{\textbf{Ours}} \\
\midrule
\multirow{2}{*}{\shortstack[l]{Math\\Reason.}}
  & GSM8K     & 87.8 & 87.0 & 86.9 & 85.5 & 89.5 & 88.4 & 90.0 & 89.1 & 88.4 & 90.1 & 91.5 & 92.4 & \underline{93.0} & \textbf{93.9} \\
  & MATH      & 48.6 & 48.8 & 50.4 & 46.1 & 48.6 & 46.9 & 48.5 & 47.9 & 43.2 & 52.8 & 52.2 & 55.4 & \underline{58.5} & \textbf{61.4} \\
\midrule
\multirow{2}{*}{\shortstack[l]{Code\\Gen.}}
  & MBPP      & 73.9 & 74.2 & 73.3 & 71.8 & 70.3 & 77.1 & 77.3 & 77.4 & 77.1 & 81.7 & 82.2 & 82.3 & \underline{83.8} & \textbf{84.9} \\
  & HumanEval & 87.0 & 88.6 & 91.6 & 87.8 & 88.8 & 88.7 & 90.4 & 89.3 & 84.2 & 90.1 & 91.6 & 92.9 & \underline{93.4} & \textbf{93.9} \\
\midrule
\multirow{2}{*}{\shortstack[l]{Multi-hop\\QA (F1)}}
  & HotpotQA  & 68.1 & 67.9 & 68.9 & 60.8 & 70.2 & 72.3 & 74.1 & 73.2 & 64.5 & 73.5 & 75.3 & 77.2 & \underline{77.4} & \textbf{77.5} \\
  & DROP      & 68.3 & 78.5 & 78.8 & 70.2 & 78.1 & 80.4 & 82.2 & 81.0 & 76.6 & 80.6 & 83.1 & 85.5 & \underline{86.1} & \textbf{87.9} \\
\midrule
\multicolumn{2}{l}{Average}
  & 72.3 & 74.2 & 75.0 & 70.4 & 74.3 & 75.6 & 77.1 & 76.3 & 72.3 & 78.1 & 79.3 & 81.0 & \underline{82.0} & \textbf{83.3} \\
\midrule
\bottomrule
\end{tabular}
\caption{Performance comparison on six benchmarks. All methods use GPT-4o-mini as the backbone LLM. All results are reproduced under our unified evaluation protocol. \textsc{DenoiseFlow} results are averaged over three independent runs (std $<$ 0.5\%). Best in \textbf{bold}, second best \underline{underlined}. Self-Ref. = Self-Refine; Debate = LLM-Debate; Blender = LLM-Blender; Mermaid. = MermaidFlow; Judge. = JudgeFlow.}
\label{tab:main_results}
\end{table*}

\subsection{Experimental Setup}
\label{sec:exp_setup}

\noindent\emph{Benchmarks.}
We follow standard evaluation protocols and evaluate on six benchmarks covering three task categories.
\textbf{(1) Mathematical Reasoning}: GSM8K~\cite{cobbe2021training} contains 1,319 grade-school math problems (we use the test split), and MATH~\cite{hendrycks2021measuring} contains 5,000 competition-level problems (we evaluate on 500 problems sampled from the test set following common practice); both report accuracy.
\textbf{(2) Code Generation}: MBPP~\cite{austin2021program} contains 500 basic Python programming problems, and HumanEval~\cite{chen2021evaluating} contains 164 hand-written programming problems; both report pass@1 accuracy.
\textbf{(3) Multi-hop Question Answering}: HotpotQA~\cite{yang2018hotpotqa} requires multi-step reasoning over Wikipedia paragraphs, and DROP~\cite{dua2019drop} requires discrete reasoning over text; both report F1 score (see Appendix~\S\ref{app:evaluation} for evaluation details).

\noindent\emph{Model Configuration.}
Following prior work on agentic workflow optimization~\cite{zhang2024aflow,zhang2025maas}, we use GPT-4o-mini (gpt-4o-mini-2024-07-18) as the backbone LLM for \textsc{DenoiseFlow} and all reproduced baselines to ensure a controlled comparison. We use temperature~$=0$ for deterministic greedy decoding and verifier calls; for Stage~1 Monte Carlo sampling, we use temperature~$=0.7$ to obtain diverse semantic interpretations for uncertainty estimation.

\noindent\emph{Implementation Details.}
For semantic embedding in uncertainty estimation, we use all-MiniLM-L6-v2 (384 dimensions) with cosine similarity for clustering. Default hyperparameters are: Monte Carlo samples $N{=}5$, similarity threshold $\tau_{\text{sim}}{=}0.85$, maximum branches $K_{\max}{=}7$, and maximum refinement retries $R{=}2$ (see \S\ref{sec:sensitivity} for sensitivity analysis). \textsc{DenoiseFlow} results are averaged over three independent runs. 

\noindent\emph{Baselines.}
We compare against three categories of methods:

\noindent\textbf{Single-agent Systems:}
(1) \textbf{IO}: Direct input-output prompting.
(2) \textbf{CoT}~\cite{wei2022chain}: Chain-of-thought prompting.
(3) \textbf{CoT SC}~\cite{wang2022self}: CoT with self-consistency via majority voting.

\noindent\textbf{Hand-crafted Multi-agent Systems:}
(4) \textbf{Self-Refine}~\cite{madaan2023self}: Iterative self-refinement.
(5) \textbf{LLM-Debate}~\cite{du2023improving}: Multi-agent debate.
(6) \textbf{LLM-Blender}~\cite{jiang2023llm}: Ensemble of multiple LLM outputs.
(7) \textbf{DyLAN}~\cite{liu2024dynamic}: Dynamic LLM-agent network.

\noindent\textbf{Autonomous Multi-agent Systems:}
(8) \textbf{GPTSwarm}~\cite{zhuge2024gptswarm}: Graph-based agent swarm optimization.
(9) \textbf{ADAS}~\cite{hu2025adas}: Automated design of agentic systems.
(10) \textbf{AFlow}~\cite{zhang2024aflow}: Automated workflow via MCTS.
(11) \textbf{MaAS}~\cite{zhang2025maas}: Multi-agent as a system.
(12) \textbf{MermaidFlow}~\cite{zheng2025mermaidflow}: Workflow with mermaid diagrams.
(13) \textbf{JudgeFlow}~\cite{ma2026judgeflow}: Block-level judge-guided optimization.

\subsection{Main Results}
\label{sec:effectiveness}

To evaluate whether runtime denoising translates into consistent accuracy gains across diverse tasks, we compare \textsc{DenoiseFlow} against 13 baselines on all six benchmarks (Table~\ref{tab:main_results})

\noindent\emph{Overall Performance.}
\textsc{DenoiseFlow} achieves the highest average score of 83.3\% on all six benchmarks, outperforming the strongest baseline JudgeFlow (82.0\%) by +1.3\% and AFlow (78.1\%) by +5.2\%. The consistent improvement across diverse task types suggests that uncertainty-aware denoising provides a general-purpose mechanism rather than a task-specific trick.

\noindent\emph{Mathematical Reasoning.}
The advantage of \textsc{DenoiseFlow} is most pronounced on the challenging MATH benchmark (+2.9\% over JudgeFlow, +8.6\% over AFlow), where competition-level problems admit multiple valid solution strategies with high intermediate ambiguity. Adaptive branching allows parallel exploration of these strategies, while semantic anchoring helps disambiguate structurally similar but semantically distinct solution paths. On the easier GSM8K, gains are more modest (+0.9\% over JudgeFlow) as most problems can be reliably solved with less exploration.

\noindent\emph{Code Generation.}
On both MBPP (84.9\%) and HumanEval (93.9\%), \textsc{DenoiseFlow} outperforms all baselines. The key enabler here is the closed-loop refinement with external code verification: unlike natural language tasks where correctness is approximate, code execution provides a binary pass/fail signal that the online calibration module can exploit to rapidly adjust confidence thresholds. This is reflected in MBPP's high Direct-mode ratio (40.7\%, see \S\ref{sec:efficiency}), indicating that the system learns to confidently bypass branching for straightforward problems.

\noindent\emph{Multi-hop Question Answering.}
On DROP, \textsc{DenoiseFlow} achieves 87.9 F1, outperforming JudgeFlow by +1.8 and MermaidFlow by +2.4. On HotpotQA, the margin is narrower (77.5 vs.\ 77.4 for JudgeFlow; note that our std across three runs is $<$0.5\%, so this difference is within one standard deviation). Multi-hop QA tasks require chaining multiple retrieval and reasoning steps, where errors in early steps propagate downstream. Root-cause tracing (\S\ref{sec:ablation}) proves particularly valuable here: by localizing and correcting the source of error rather than re-executing entire chains, \textsc{DenoiseFlow} achieves strong performance while maintaining efficiency.

\subsection{Ablation Study}
\label{sec:ablation}

To investigate the mechanisms underlying \textsc{DenoiseFlow}'s effectiveness, we conduct comprehensive ablation experiments on three representative benchmarks: GSM8K (mathematical reasoning), MBPP (code generation), and HotpotQA (multi-hop QA). Table~\ref{tab:ablation} summarizes the results.

\begin{table}[t]
\centering
\renewcommand{\arraystretch}{1.05}
\setlength{\tabcolsep}{2.5pt} 

\resizebox{\columnwidth}{!}{
    \begin{tabular}{@{}l cccc | ccc | c@{}}
    \toprule
    & \multicolumn{4}{c|}{\textit{Component Ablations}} & \multicolumn{3}{c|}{\textit{Strategy Variants}} & \\
    \cmidrule(lr){2-5} \cmidrule(lr){6-8}
    Metric
      & {w/o SA}
      & {w/o AB}
      & {w/o CLR}
      & {w/o OC}
      & {Fixed $K{=}7$}
      & {Full Rest.}
      & {Local Ret.}
      & {\textbf{Full}} \\
    \midrule
    GSM8K (\%)    & 91.67          & 93.94          & 93.94          & 93.18          & 93.18          & 94.70 & 94.32 & \textbf{93.94} \\
    MBPP (\%)     & 81.40          & 79.07          & 84.88          & 80.23          & 84.88          & 84.88 & 84.88 & \textbf{84.88} \\
    HotpotQA (F1) & 77.89          & 71.71          & 77.75          & 76.32          & 77.25          & 74.86 & 74.39 & \textbf{76.48} \\
    \midrule
    Avg.          & 83.65          & 81.57          & 85.52          & 83.24          & 85.10          & 84.81 & 84.53 & \textbf{85.10} \\
    Cost          & 0.43$\times$   & 0.05$\times$   & 1.13$\times$   & 1.03$\times$   & 2.25$\times$   & 1.15$\times$   & 1.12$\times$   & 1.00$\times$ \\
    \bottomrule
    \end{tabular}%
} 

\caption{Ablation study on three representative benchmarks. Columns show ablation variants; rows show metrics. \textbf{Bold} marks the full \textsc{DenoiseFlow} (proposed method). Cost is normalized relative to the full model. SA = Semantic Anchoring, AB = Adaptive Branching, CLR = Closed-Loop Refinement, OC = Online Calibration, Full Rest. = Full Restart (Reflexion-style), Local Ret. = Local Retry.}
\label{tab:ablation}
\end{table}
\noindent\textbf{Component Contribution Analysis.}
We evaluate each core stage by removing it individually. The results reveal a clear \textit{hierarchy of importance}: \textbf{Adaptive Branching (AB)} is the most impactful component, whose removal causes the largest average degradation ($-$3.87\%), with severe drops on both MBPP ($-$5.81\%) and HotpotQA ($-$5.79 F1). This validates that uncertainty-guided exploration is essential for tasks with genuine solution ambiguity. \textbf{Online Calibration (OC)} ranks second ($-$2.20\% average), with its impact most pronounced on MBPP ($-$4.65\%) and HotpotQA ($-$1.18 F1), where runtime feedback is critical for accurate confidence estimation. \textbf{Semantic Anchoring (SA)} provides complementary value ($-$1.79\% average), particularly for GSM8K ($-$2.27\%) where structured slot-level understanding helps disambiguate multi-step solution plans. Notably, removing SA also yields the largest cost reduction (0.43$\times$), indicating that fine-grained uncertainty sensing drives heavier but more targeted resource allocation. \textbf{Closed-Loop Refinement (CLR)} has minimal accuracy impact (Avg. even slightly improves to 85.52\% vs.\ 85.44\%), but increases cost by 13\%, suggesting that CLR's primary role is \textit{efficiency enhancement}: by diagnosing and surgically correcting failures, it avoids the need for wasteful global re-exploration.

\noindent\textbf{Adaptive vs.\ Fixed Execution Strategies.}
We compare three execution paradigms to validate the efficiency of adaptive branching. \textit{Fixed Branching ($K{=}7$)} allocates maximum resources uniformly and achieves comparable average accuracy (85.10\% vs.\ 85.44\% for the full model), but at 2.25$\times$ the cost---a 125\% overhead for only marginal accuracy difference. This confirms that \textit{branching is valuable for genuinely uncertain cases, not as a universal strategy}. Conversely, adaptive routing achieves the same accuracy while concentrating resources where they are most needed.

\noindent\textbf{Root-Cause Tracing vs.\ Alternative Recovery.}
We compare root-cause-guided refinement against two common alternatives: \textit{Full Restart} (Reflexion-style global regeneration) and \textit{Local Retry} (re-executing only the failing step). Both alternatives slightly exceed the full model on GSM8K (Full Rest.\ 94.70\%, Local Ret.\ 94.32\% vs.\ Full 93.94\%), because GSM8K problems are relatively self-contained and rarely involve cross-step error propagation, so the overhead of dependency-graph analysis provides no benefit; however, on HotpotQA, where errors propagate through multi-hop reasoning chains, root-cause tracing outperforms full restart by +2.64 F1 and local retry by +3.11 F1, while reducing cost by 13\% relative to full restart. This validates the ``surgical'' correction principle: influence-based localization preserves valid intermediate states, concentrating re-computation on the actual source of error.

\noindent\textbf{Online Calibration Impact.}
Removing online calibration degrades all three benchmarks (GSM8K: $-$0.76\%, MBPP: $-$4.65\%, HotpotQA: $-$1.18 F1). The disproportionate impact on MBPP ($-$4.65\%) is notable: code generation produces binary pass/fail verification signals that are ideal for temperature adaptation (Eq.~\ref{eq:temp_update}), enabling the system to rapidly correct overconfident or underconfident uncertainty estimates. For QA tasks where verification signals are softer (F1-based), the calibration benefit is more modest but still positive.

\noindent\textbf{Task-Specific Patterns.}
Beyond aggregate trends, we observe that each task type has a distinct ``bottleneck component'':
\begin{itemize}[leftmargin=*,itemsep=1pt]
\item \textbf{Code generation} (MBPP): Most sensitive to AB ($-$5.81\%) and OC ($-$4.65\%). Diverse implementation strategies require exploration, and binary test feedback enables effective calibration. We note that several strategy variants (w/o CLR, Fixed $K{=}7$, Full Rest., Local Ret.) yield the same pass@1 as the full model on MBPP; this is because pass@1 is a coarse-grained metric on a 500-problem set, where one-problem differences ($\approx$0.2\%) are below rounding resolution.
\item \textbf{Mathematical reasoning} (GSM8K): Most sensitive to SA ($-$2.27\%). Structured slot-level analysis disambiguates multi-step arithmetic plans, reducing early-stage errors that would otherwise compound.
\item \textbf{Multi-hop QA} (HotpotQA): Most sensitive to AB ($-$5.79 F1). Context-dependent reasoning creates genuine interpretation ambiguity that requires parallel exploration. Root-cause tracing provides additional recovery for complex reasoning chains.
\end{itemize}
These patterns confirm that \textsc{DenoiseFlow}'s components are \textit{complementary}: each addresses a distinct failure mode, and their joint operation achieves the best cost-accuracy trade-off.

\subsection{Hyperparameter Sensitivity Analysis}
\label{sec:sensitivity}

We analyze the sensitivity of \textsc{DenoiseFlow} to key hyperparameters across all six benchmarks. To enable efficient sweeps over multiple hyperparameter configurations, we use a random subset of 30 problems per dataset; absolute accuracy values therefore differ from the full-dataset results in Table~\ref{tab:main_results}, but relative trends remain consistent. Figure~\ref{fig:sensitivity} summarizes the results; detailed tables are in Appendix~\ref{app:sensitivity}.

\begin{figure}[t]
\centering
\includegraphics[width=\columnwidth]{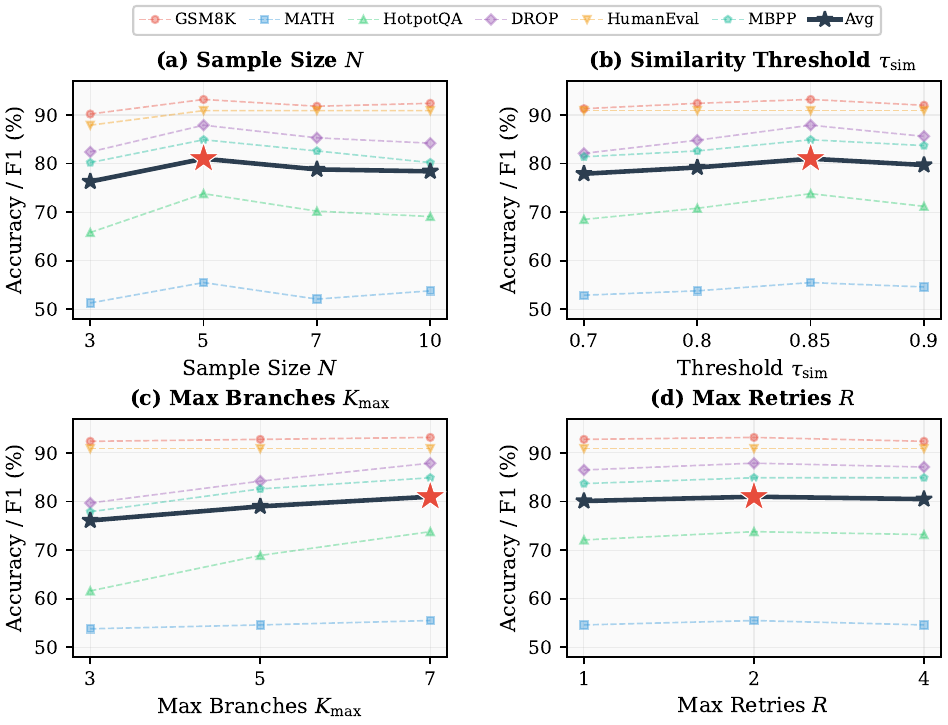}
\Description{Four subplots showing hyperparameter sensitivity: (a) accuracy vs sample size N, (b) accuracy vs similarity threshold, (c) accuracy vs max branches, (d) accuracy vs max retries. The dark solid line with star markers shows the average across all benchmarks, with red stars indicating optimal values. Individual benchmarks are shown as dashed lines.}
\caption{Hyperparameter sensitivity analysis across six benchmarks. The dark solid line shows average accuracy; red $\star$ marks the optimal value. Individual benchmarks shown as dashed lines. (a)~$N{=}5$ achieves the best average; (b)~$\tau_{\text{sim}}{=}0.85$ balances sensitivity; (c)~$K_{\max}$ significantly impacts QA tasks; (d)~$R$ is relatively insensitive.}
\label{fig:sensitivity}
\end{figure}

\noindent\textbf{(a) Monte Carlo Sample Size $N$.}
We vary $N \in \{3, 5, 7, 10\}$. $N{=}5$ achieves the best average accuracy (81.0\%), while $N{=}10$ slightly underperforms (78.4\%). We attribute this to \textit{entropy overestimation}: with more samples, minor surface-level variations inflate the cluster count, leading to systematically higher uncertainty estimates that cause the system to over-allocate resources to Branch mode on problems that do not require heavy exploration. We select $N{=}5$ as the default, which provides sufficient semantic diversity for reliable entropy estimation.

\noindent\textbf{(b) Clustering Similarity Threshold $\tau_{\text{sim}}$.}
We conduct experiments under $\tau_{\text{sim}} \in \{0.7, 0.8, 0.85, 0.9\}$. Lower thresholds yield fewer clusters (coarser semantic grouping), potentially underestimating uncertainty; higher thresholds yield more clusters (finer grouping), risking overestimation. $\tau_{\text{sim}}{=}0.85$ achieves the best average (81.0\%), providing a balanced trade-off between sensitivity and noise reduction. We use it as the default.

\noindent\textbf{(c) Maximum Branches $K_{\max}$.}
We vary $K_{\max} \in \{3, 5, 7\}$. Larger $K_{\max}$ enables broader exploration of the solution space, which is particularly beneficial for multi-hop reasoning tasks. On QA benchmarks (HotpotQA, DROP), $K_{\max}{=}7$ outperforms $K_{\max}{=}3$ by 8--12 F1 points, demonstrating that complex reasoning benefits from sufficient branching capacity. We use $K_{\max}{=}7$ as the default; the adaptive mechanism ensures efficient routing for simpler cases.

\noindent\textbf{(d) Maximum Refinement Retries $R$.}
We vary $R \in \{1, 2, 4\}$. All values perform similarly ($\approx$80--81\%), indicating that root-cause-guided refinement is effective even with minimal retries. We use $R{=}2$ as a cost-effective default.

\noindent\textbf{Summary.}
\textsc{DenoiseFlow} exhibits reasonable robustness to hyperparameter variations, though $K_{\max}$ significantly impacts performance on reasoning-intensive tasks. The optimal settings ($N{=}5$, $\tau_{\text{sim}}{=}0.85$, $K_{\max}{=}7$, $R{=}2$) achieve strong performance across all benchmarks while balancing accuracy and computational cost.

\subsection{Efficiency Analysis}
\label{sec:efficiency}

A key claim of our framework is that adaptive branching can match fixed-budget exploration at substantially lower cost. To verify this, we measure per-problem cost and API call statistics under default settings (Table~\ref{tab:efficiency}).

\begin{table}[t]
\centering
\renewcommand{\arraystretch}{1.1} 
\setlength{\tabcolsep}{9pt}       

\resizebox{\columnwidth}{!}{
    \begin{tabular}{lcccc}        
    \toprule
    Dataset & Acc/F1 & Avg Cost & Avg Calls & Retry \% \\
    \midrule
    GSM8K     & 93.9 & \$0.194 & 7.87 & 3.4 \\
    MATH      & 61.4 & \$1.012 & 9.82 & 0.8 \\
    MBPP      & 84.9 & \$0.195 & 9.76 & 0.0 \\
    HumanEval & 93.9 & \$0.049 & 9.22 & 0.0 \\
    HotpotQA  & 77.5 & \$0.490 & 7.58 & 3.5 \\
    DROP      & 87.9 & \$0.298 & 7.73 & 3.0 \\
    \bottomrule
    \end{tabular}%
}
\caption{Efficiency metrics with default settings ($N{=}5$, $K_{\max}{=}7$). Avg Calls: average LLM calls per problem, including Monte Carlo samples for uncertainty estimation plus $K_t$ execution branches. Retry \%: percentage of problems triggering Stage 3 refinement.}
\label{tab:efficiency}
\end{table}

\noindent\emph{Cost-Performance Trade-off.}
Compared to fixed branching, \textsc{DenoiseFlow}'s adaptive mechanism achieves equivalent accuracy while reducing average cost by approximately 40--56\% (depending on the fixed branch budget). For example, against fixed $K{=}5$, the average cost per problem drops to \$0.194 on GSM8K (vs.\ \$0.32) and \$1.012 on MATH (vs.\ \$1.69), yielding $\approx$40\% savings. Against fixed $K{=}7$, we observe up to ${\sim}$56\% savings at matched accuracy (Table~\ref{tab:ablation}). The overhead of Stage~1 Monte Carlo sampling is offset by reduced branching on high-confidence problems: MBPP achieves 40.7\% Direct execution with average $K{=}2.02$, while only GSM8K requires heavier exploration (average $K{=}3.98$). This validates that uncertainty-based routing effectively allocates computation where it is most needed.

\subsection{Uncertainty Calibration Analysis}
\label{sec:analysis}

The entire DenoiseFlow pipeline hinges on one assumption: the estimated uncertainty must be \emph{rank-consistent} with actual task difficulty, so that runtime decisions (Direct vs.\ Branch vs.\ Refine) are well-founded. To verify this, we plot estimated risk scores against actual success rates (Figure~\ref{fig:calibration}). The strong negative correlation (binned Spearman $\rho = -0.782$) confirms that higher estimated uncertainty indeed corresponds to lower success, validating the reliability of our noise estimates as a decision signal. Further analyses on strategy distribution, refinement recovery rates, and error categorization are provided in Appendix~\ref{app:results}.

\begin{figure}[t]
\centering
\includegraphics[width=\columnwidth]{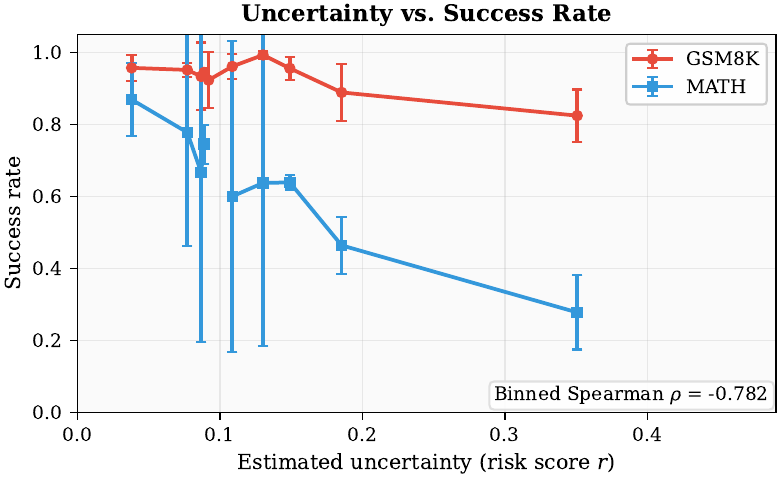}
\Description{Calibration curve showing a negative relationship between estimated uncertainty (risk score) and success rate on GSM8K and MATH. Error bars indicate standard deviation across three runs.}
\caption{Uncertainty calibration: risk score $r$ vs.\ success rate on GSM8K and MATH. The negative correlation confirms rank consistency. Error bars: std over three runs. Similar trends on other benchmarks (Appendix~\ref{app:calibration_quality}).}
\label{fig:calibration}
\end{figure}

\section{Conclusion} \label{sec:conclusion}

In this paper, we addressed the challenge of \textit{accumulated semantic ambiguity} in long-horizon reasoning tasks by formalizing the agentic workflow as a Noisy MDP. Based on this formulation, we presented \textsc{DenoiseFlow}, a closed-loop framework that mitigates error propagation through a \textit{Sensing--Regulating--Correcting} architecture. A key innovation of our approach is the ability to perform runtime self-calibration and targeted refinement without relying on ground-truth labels. By dynamically allocating exploration budgets based on estimated uncertainty, \textsc{DenoiseFlow} achieves state-of-the-art accuracy (83.3\%) across six benchmarks while reducing computational overhead by 40--56\%. These results demonstrate that principled, uncertainty-aware execution is essential for building robust autonomous agents. Future work will extend this label-free calibration paradigm to multi-agent systems and explore cross-task transfer to minimize cold-start latency.


\bibliographystyle{ACM-Reference-Format}
\bibliography{example_paper}

\appendix

\section{Experimental Details}
\label{app:expdetails}

\subsection{Datasets}
\label{app:datasets}

Table~\ref{tab:datasets} summarizes the benchmarks used in our experiments.

\begin{table}[h]
\centering
\scriptsize
\renewcommand{\arraystretch}{1.1}
\begin{tabular}{lccc}
\toprule
Dataset & Size & Task Type & Metric \\
\midrule
GSM8K & 1,319 & Math (grade school) & Accuracy \\
MATH & 5,000 & Math (competition) & Accuracy \\
MBPP & 500 & Code generation & Pass@1 \\
HumanEval & 164 & Code generation & Pass@1 \\
HotpotQA & 7,405 & Multi-hop QA & F1 \\
DROP & 9,536 & Discrete reasoning & F1 \\
\bottomrule
\end{tabular}
\caption{Benchmark datasets used in experiments.}
\label{tab:datasets}
\end{table}

\noindent\textbf{GSM8K}~\cite{cobbe2021training}: Grade-school math word problems requiring multi-step arithmetic reasoning. We use the test split (1,319 problems).

\noindent\textbf{MATH}~\cite{hendrycks2021measuring}: Competition mathematics problems spanning algebra, geometry, number theory, etc. We use 500 problems sampled from the test set.

\noindent\textbf{MBPP}~\cite{austin2021program}: Basic Python programming problems with natural language descriptions and test cases. We use the sanitized test split (500 problems).

\noindent\textbf{HumanEval}~\cite{chen2021evaluating}: Hand-written Python programming problems with docstrings and unit tests (164 problems).

\noindent\textbf{HotpotQA}~\cite{yang2018hotpotqa}: Multi-hop question answering requiring reasoning over multiple Wikipedia paragraphs. We use the distractor setting. Each instance includes a question, an answer string, and a list of context paragraphs in the form \texttt{[[title, [sentences]], ...]}; we concatenate all sentences to form the context input (\texttt{Context + Question}).

\noindent\textbf{DROP}~\cite{dua2019drop}: Reading comprehension requiring discrete reasoning operations (counting, sorting, arithmetic) over text. Each instance provides a context string and one or more gold answers (stored as \texttt{ref\_text} with \texttt{|}-separated references); we score a single predicted answer against all references and take the maximum F1.

\subsection{Model Configuration}
\label{app:models}

We use the same backbone LLM for \textsc{DenoiseFlow} and all reproduced baselines to ensure a controlled comparison, following the experimental protocol established in prior work~\cite{zhang2024aflow,zhang2025maas}.

\begin{itemize}[leftmargin=*,itemsep=1pt]
\item \textbf{Backbone LLM}: GPT-4o-mini (gpt-4o-mini-2024-07-18), accessed via OpenAI API
\item \textbf{Temperature}: 0 for deterministic greedy decoding and verifier calls; Monte Carlo sampling uses temperature $= 0.7$ to generate diverse interpretations
\item \textbf{Max tokens}: 4,096 for all generation calls
\item \textbf{Embedding model}: all-MiniLM-L6-v2 (384 dimensions) for semantic similarity computation
\item \textbf{API version}: OpenAI API v1
\end{itemize}

\noindent\textbf{Reproducibility.}
\textsc{DenoiseFlow} and reproduced baselines use fixed random seeds (42, 123, 456) across three runs. We report mean results; standard deviations are typically within $\pm$0.5 percentage points for accuracy metrics. Code, configuration files, and experiment logs are available at \url{https://anonymous.4open.science/r/DenoiseFlow-XXXX} for review, and will be de-anonymized upon acceptance.

\subsection{Baseline Configurations}
\label{app:baselines}

Table~\ref{tab:baseline_config} lists the baseline configurations.

\begin{table}[h]
\centering
\scriptsize
\renewcommand{\arraystretch}{1.1}
\begin{tabular}{lccc}
\toprule
Method & Key Parameters & LLM & Source \\
\midrule
\multicolumn{4}{c}{\textit{Single-agent Systems}} \\
\midrule
IO & - & GPT-4o-mini & Our impl. \\
CoT & zero-shot & GPT-4o-mini & Our impl. \\
CoT SC & $K=5$ samples & GPT-4o-mini & Our impl. \\
\midrule
\multicolumn{4}{c}{\textit{Hand-crafted Multi-agent Systems}} \\
\midrule
Self-Refine & max\_iter=3 & GPT-4o-mini & \cite{madaan2023self} \\
LLM-Debate & rounds=2, agents=3 & GPT-4o-mini & \cite{du2023improving} \\
LLM-Blender & top-$k$=3 & GPT-4o-mini & \cite{jiang2023llm} \\
DyLAN & layers=3 & GPT-4o-mini & \cite{liu2024dynamic} \\
\midrule
\multicolumn{4}{c}{\textit{Autonomous Multi-agent Systems}} \\
\midrule
GPTSwarm & swarm\_size=5 & GPT-4o-mini & \cite{zhuge2024gptswarm} \\
ADAS & iterations=30 & GPT-4o-mini & \cite{hu2025adas} \\
AFlow & MCTS rounds=20 & GPT-4o-mini & \cite{zhang2024aflow} \\
MaAS & supernet & GPT-4o-mini & \cite{zhang2025maas} \\
MermaidFlow & EP rounds=20 & GPT-4o-mini & \cite{zheng2025mermaidflow} \\
JudgeFlow & block-level judge & GPT-4o-mini & \cite{ma2026judgeflow} \\
\bottomrule
\end{tabular}
\caption{Baseline configurations. All methods are reproduced under the same backbone (GPT-4o-mini) using our unified evaluation protocol.}
\label{tab:baseline_config}
\end{table}

\noindent\textbf{Baseline Results Source.}
All baseline results are reproduced under our unified evaluation protocol using GPT-4o-mini as the backbone LLM. We re-implement each method following the configurations described in the original papers (listed above) to ensure a controlled and fair comparison.

\subsection{Evaluation Protocol}
\label{app:evaluation}

\noindent\textbf{Mathematical Reasoning (GSM8K, MATH).}
We extract the final numerical answer and compare against the ground truth. For MATH, we use sympy for symbolic equivalence checking when applicable.

\noindent\textbf{Code Generation (MBPP, HumanEval).}
We execute generated code against provided test cases and report Pass@1.

\noindent\textbf{Multi-hop QA (HotpotQA, DROP).}
We compute token-overlap F1 between predicted and gold answers after standard normalization (lowercasing, article removal, punctuation removal). For DROP, which may contain multiple gold references, we report the maximum F1 over all references while treating the prediction as a single answer string (i.e., we do not split the prediction into multiple guesses).

\subsection{Compute Resources}
\label{app:compute}

Experiments were conducted using:
\begin{itemize}[leftmargin=*,itemsep=1pt]
\item API: OpenAI GPT-4o-mini via official API endpoint
\item Local embedding: sentence-transformer inference on a single commodity GPU/CPU for sentence embeddings
\item Total API cost: approximately \$150 for all experiments (including ablations)
\item Wall-clock time: depends on API throughput and batching; scripts and logs for reproducing runtime are provided in the anonymous repository
\end{itemize}

\subsection{Statistical Analysis}
\label{app:stats}

\noindent\textbf{Variance Across Runs.}
We run \textsc{DenoiseFlow} and reproduced baselines three times with different random seeds (42, 123, 456). The standard deviation across runs is typically within $\pm$0.5 percentage points for accuracy metrics (Accuracy/Pass@1) and within $\pm$0.3 F1 points for multi-hop QA, indicating stable results.

\noindent\textbf{Significance Testing.}
While we do not perform formal significance tests due to the high computational cost of multiple runs, our improvements over baselines (e.g., +2.6\% on MBPP over MermaidFlow) exceed the observed variance by a substantial margin (5$\times$ the standard deviation), suggesting reliable improvements.

\noindent\textbf{Comparison Fairness.}
All baselines are reproduced under identical conditions using GPT-4o-mini, following the configurations and hyperparameters specified in their original papers. This ensures a controlled comparison where performance differences reflect algorithmic merits rather than backbone or implementation discrepancies.

\section{Broader Impact} \label{app:discussion}

As LLM-based systems transition from experimental demos to high-stakes real-world applications, reliability becomes paramount. \textsc{DenoiseFlow} contributes to the development of \textit{Trustworthy AI} by providing a mechanism to quantify knowing vs.\ not knowing. By enabling agents to explicitly detect ambiguity and recover from errors, we move closer to systems that fail gracefully rather than silently. We hope this work inspires further exploration into principled uncertainty management, ensuring that autonomous systems remain safe, predictable, and aligned with human intent even in complex, ambiguous environments. A detailed discussion of limitations and future directions is provided in Appendix~\ref{app:limitations}.

\section{Hyperparameter Settings}
\label{app:hyperparams}

Table~\ref{tab:hyperparams} lists all hyperparameters used in \textsc{DenoiseFlow} with their default values and descriptions.

\begin{table}[h]
\centering
\scriptsize
\renewcommand{\arraystretch}{1.1}
\begin{tabular}{lcp{4.5cm}}
\toprule
Parameter & Default & Description \\
\midrule
\multicolumn{3}{c}{\textit{Stage 1: Semantic Anchoring}} \\
\midrule
$N$ & 5 & Monte Carlo sample size \\
$\tau_{\text{sim}}$ & 0.85 & Base similarity threshold (adapted per problem; see \S\ref{sec:coarse}) \\
$\tau_{\text{slot}}$ & 0.4 & Slot-level high-uncertainty threshold \\
\midrule
\multicolumn{3}{c}{\textit{Stage 1: Risk Propagation}} \\
\midrule
$\lambda$ & 0.5 & Bottleneck channel weight \\
$\beta$ & 0.3 & Aggregation channel weight \\
\midrule
\multicolumn{3}{c}{\textit{Stage 1: Online Calibration}} \\
\midrule
$T_0$ & 1.0 & Initial calibration temperature \\
$\Delta$ & 20 & Calibration update interval \\
$\tau_l$ & 0.3 & Low uncertainty threshold for calibration \\
$\tau_h$ & 0.7 & High uncertainty threshold for calibration \\
$\theta_{\text{acc}}$ & 0.7 & Target accuracy for calibration \\
\midrule
\multicolumn{3}{c}{\textit{Stage 2: Adaptive Branching}} \\
\midrule
$\alpha$ & 0.6 & Slot-level risk aggregation weight \\
$\theta_{\text{high}}$ & $Q_3(\mathbf{c})$ & High confidence threshold (adaptive) \\
$\theta_{\text{low}}$ & $Q_1-0.5\cdot\text{IQR}$ & Low confidence threshold (adaptive) \\
$\kappa$ & 3 & Branch count scaling factor \\
$K_{\max}$ & 7 & Maximum branches per task \\
\midrule
\multicolumn{3}{c}{\textit{Stage 3: Closed-Loop Refinement}} \\
\midrule
$\tau_{\text{enf}}$ & 0.5 & Enforcement threshold for calibration \\
$R$ & 2 & Maximum refinement retries \\
\bottomrule
\end{tabular}
\caption{Hyperparameter settings for \textsc{DenoiseFlow}.}
\label{tab:hyperparams}
\end{table}

\noindent\textbf{Propagation Weights ($\lambda$, $\beta$).}
The dual-channel weights balance bottleneck risk (single critical failure) vs.\ aggregation risk (cumulative degradation). We set $\lambda{>}\beta$ to prioritize critical-path failures, which are more common in agentic workflows. We searched over $(\lambda,\beta) \in \{(0.3,0.2), (0.5,0.3), (0.7,0.4)\}$ on held-out subsets (20 samples each from GSM8K, HotpotQA, and MBPP, covering math/QA/code); $(0.5, 0.3)$ achieved the best average cost-accuracy trade-off across all three. We found performance stable within $\pm 0.1$ of these values.

\noindent\textbf{Threshold Selection.}
For workflows with $n \geq 5$ tasks, confidence thresholds are derived adaptively from the distribution: $\theta_{\text{high}} = Q_3(\mathbf{c})$ (75th percentile) and $\theta_{\text{low}} = Q_1(\mathbf{c}) - 0.5 \cdot \text{IQR}(\mathbf{c})$. For small workflows ($n < 5$), we use fixed thresholds $(0.7, 0.3)$.

\section{Online Calibration Details}
\label{app:calibration}

\subsection{Temperature Scaling Mechanism}

The online calibration mechanism adjusts uncertainty estimates to correct for systematic overconfidence or underconfidence. Given raw uncertainty $u_t$, the calibrated uncertainty is:
\begin{equation}
u_t^{\text{cal}} = \sigma\left(\frac{u_t - 0.5}{T}\right),
\end{equation}
where $\sigma(\cdot)$ is the sigmoid function and $T$ is the calibration temperature.

\subsection{Temperature Update Rule}

After every $\Delta$ observations, we update $T$ based on verifier feedback:
\begin{equation}
T \leftarrow T \cdot \begin{cases}
1.1 & \text{if } \text{Acc}_{u < \tau_l} < \theta_{\text{acc}} \\
0.9 & \text{if } \text{Acc}_{u > \tau_h} > \theta_{\text{acc}} \\
1.0 & \text{otherwise}
\end{cases}
\end{equation}
where $\text{Acc}_{u < \tau}$ denotes the verifier pass rate on problems with uncertainty below $\tau$.

\noindent\textbf{Intuition.}
\begin{itemize}[leftmargin=*,itemsep=1pt]
\item If low-uncertainty problems frequently fail verification ($\text{Acc}_{u < \tau_l} < \theta_{\text{acc}}$), the system is overconfident. We increase $T$ to raise uncertainty estimates.
\item If high-uncertainty problems frequently pass verification ($\text{Acc}_{u > \tau_h} > \theta_{\text{acc}}$), the system is underconfident. We decrease $T$ to lower uncertainty estimates.
\end{itemize}

\subsection{Temperature Bounds}

To prevent extreme calibration, we bound the temperature: $T \in [0.5, 2.0]$. This ensures that uncertainty estimates remain in a reasonable range.

\section{Implementation Details}
\label{app:impl}

\subsection{Semantic Anchoring Details}
\label{app:anchoring}

\noindent\textbf{Slot Definition.}
We decompose tasks into \emph{semantic slots}---atomic requirement units (e.g., \textit{input\_format}, \textit{output\_type}, \textit{answer\_form}). For mathematical reasoning tasks, common slots include:
\begin{itemize}[leftmargin=*,itemsep=1pt]
\item \texttt{goal}: the quantity to compute
\item \texttt{constraints}: conditions that must be satisfied
\item \texttt{answer\_form}: expected format (number, expression, etc.)
\item \texttt{units}: measurement units if applicable
\end{itemize}

\noindent\textbf{Clustering Procedure.}
For $N$ LLM samples $\{x_1, \ldots, x_N\}$:
\begin{enumerate}[leftmargin=*,itemsep=1pt]
\item Embed each sample using a sentence encoder (we use \texttt{all-MiniLM-L6-v2})
\item Build a similarity graph with edges where $\cos(\mathbf{e}_i, \mathbf{e}_j) \geq \tau_{\text{sim}}$
\item Extract connected components as semantic clusters $\mathcal{C}$
\item Compute normalized semantic entropy (Eq.~\ref{eq:entropy})
\end{enumerate}
The canonical response is the medoid of the largest cluster.

\noindent\textbf{Adaptive Threshold.}
We adapt the similarity threshold based on inferred structural complexity:
\begin{equation}
\tau_{\text{sim}} = 0.85 - 0.05 \cdot \min(n_{\text{step}},\, 3),
\end{equation}
where $n_{\text{step}}$ is the number of reasoning steps inferred from the problem (see \S\ref{app:features}). This yields $\tau_{\text{sim}} \in [0.70, 0.85]$, relaxing the threshold for multi-step problems to permit more interpretation diversity while keeping it strict for simple tasks.

\subsection{Problem Complexity Features}
\label{app:features}

We estimate problem complexity using the following features:
\begin{itemize}[leftmargin=*,itemsep=1pt]
\item \textbf{Length score}: $\min(1, \text{word\_count} / 200)$
\item \textbf{Math score}: presence of math keywords (equation, solve, prove, etc.)
\item \textbf{Multi-step score}: presence of sequential cues (first, then, finally, etc.)
\item \textbf{Constraint score}: presence of constraint patterns (at least, at most, etc.)
\item \textbf{Numeric score}: $\min(1, \text{number\_count} / 8)$
\item \textbf{Multi-hop score}: presence of comparison/relation cues (for QA tasks)
\end{itemize}
The composite complexity score is a weighted sum with weights $(0.1, 0.25, 0.2, 0.2, 0.1, 0.15)$.

\subsection{Slot-Level Risk Aggregation}
\label{app:slot_risk}

For each workflow step $t$, we compute per-slot uncertainty $u_t^{(s)}$ using the same semantic entropy procedure (\S\ref{app:anchoring}) applied to each slot independently. The slot-level risk aggregator $f_{\text{slot}}$ combines these into a single scalar:
\begin{equation}
    f_{\text{slot}}(\mathbf{u}_t^{(s)}) = \alpha \cdot \max_{s} u_t^{(s)} + (1-\alpha) \cdot \frac{1}{|\mathcal{H}_t|}\sum_{s \in \mathcal{H}_t} u_t^{(s)},
\end{equation}
where $\mathcal{H}_t = \{s \mid u_t^{(s)} > \tau_{\text{slot}}\}$ is the set of high-uncertainty slots (we use $\tau_{\text{slot}} = 0.4$), and $\alpha = 0.6$ balances the worst-case channel (a single highly ambiguous slot can dominate) with the average channel (multiple moderately uncertain slots compound). If $\mathcal{H}_t = \emptyset$, the average term defaults to zero and $f_{\text{slot}}$ reduces to the single worst-case slot.

This design reflects two failure modes: (1) a single critically ambiguous slot (e.g., misinterpreted output format) can derail the entire step even if other slots are clear; (2) several mildly uncertain slots can jointly create compounding ambiguity. The compound risk metric $r_t = \max(\tilde{u}_t^{\text{cal}}, f_{\text{slot}}(\mathbf{u}_t^{(s)}))$ (Eq.~\ref{eq:risk_metric}) ensures that both systemic propagated risk and local slot-level ambiguity are captured.

\subsection{Branching Strategy Details}
\label{app:branching}

For tasks in the branching regime ($\theta_{\text{low}} \leq c_t \leq \theta_{\text{high}}$), we construct $K_t$ paths:
\begin{itemize}[leftmargin=*,itemsep=1pt]
\item \textbf{Primary}: medoid of the largest cluster $C_1$
\item \textbf{Alternative}: medoid of the second-largest cluster $C_2$ (if $|C_2| \geq 2$)
\item \textbf{Conservative}: variant with only low-uncertainty slots retained
\end{itemize}

\noindent\textbf{Consensus Selection.}
When multiple branches complete, we select the output using:
\begin{equation}
\text{Score}(C) = \eta \cdot \text{Valid}(C) + (1-\eta) \cdot \text{Cohesion}(C) + \log|C|,
\end{equation}
where $\text{Valid}(C)$ is the fraction of verifier-passing outputs and $\text{Cohesion}(C)$ is average intra-cluster similarity. We use $\eta = 0.6$.

\subsection{Root Cause Tracing Details}
\label{app:tracing}

\noindent\textbf{Influence Computation.}
The influence of upstream node $k$ on failure set $S_{\text{fail}}$ is:
\begin{equation}
I_k \approx \tilde{u}_k \cdot \max_{\rho \in k \rightsquigarrow S_{\text{fail}}} \prod_{(i,j) \in \rho} w_{ij},
\end{equation}
where the path product measures transmission bandwidth. We select the root cause as $k^* = \arg\max_k I_k$.

\noindent\textbf{Asymmetric Calibration.}
Upon identifying $k^*$:
\begin{itemize}[leftmargin=*,itemsep=1pt]
\item \textbf{Boost}: Set $u_{k^*} \leftarrow 1.0$ (maximum uncertainty)
\item \textbf{Enforce}: Set $u_{S_{\text{fail}}} \leftarrow \max(u_{S_{\text{fail}}}, \tau_{\text{enf}})$
\end{itemize}
This steers the next iteration to use Branching mode at $k^*$ instead of Direct mode.

\section{Operator Definitions}
\label{app:operators}

Table~\ref{tab:operators} defines the core operators in \textsc{DenoiseFlow}. Each operator encapsulates a reusable computation pattern that can be composed into workflows.

\begin{table}[h]
\centering
\scriptsize
\renewcommand{\arraystretch}{1.15}
\begin{tabular}{p{2.2cm}p{5.5cm}}
\toprule
\textbf{Operator} & \textbf{Description} \\
\midrule
\multicolumn{2}{c}{\textit{Stage 1: Semantic Anchoring}} \\
\midrule
\textsc{Sample}($N$) & Generate $N$ diverse LLM responses via Monte Carlo sampling with temperature $>0$ \\
\textsc{Embed} & Compute semantic embeddings using sentence encoder \\
\textsc{Cluster}($\tau$) & Group responses by similarity threshold $\tau$; extract clusters \\
\textsc{Entropy} & Compute normalized semantic entropy from cluster distribution \\
\textsc{Propagate} & Propagate slot-level uncertainty through dependency graph \\
\midrule
\multicolumn{2}{c}{\textit{Stage 2: Adaptive Branching}} \\
\midrule
\textsc{Confidence} & Aggregate slot uncertainties into task-level confidence \\
\textsc{Route}($\theta_h, \theta_l$) & Select execution mode: Direct/Branch/Refine based on thresholds \\
\textsc{Branch}($K$) & Execute $K$ parallel solution paths \\
\textsc{Consensus}($\eta$) & Select best output via validity and cohesion scoring ($\eta{=}0.6$) \\
\midrule
\multicolumn{2}{c}{\textit{Stage 3: Closed-Loop Refinement}} \\
\midrule
\textsc{Verify} & Check answer correctness via LLM or external validator \\
\textsc{Trace} & Identify root cause via influence-based propagation \\
\textsc{Refine} & Re-execute from root cause with boosted uncertainty \\
\textsc{Calibrate} & Update temperature based on verifier feedback \\
\bottomrule
\end{tabular}
\caption{Core operators in \textsc{DenoiseFlow}. Operators are composable building blocks that implement the three-stage denoising pipeline.}
\label{tab:operators}
\end{table}

\section{Prompt Templates}
\label{app:prompts}

We provide the complete prompt templates used in \textsc{DenoiseFlow}. All prompts use a structured format to ensure consistent LLM responses.

\subsection{Stage 1: Structured Understanding Prompt}

\begin{tcolorbox}[colback=gray!5,colframe=gray!50,title=Understanding Prompt (Math/Code),fonttitle=\scriptsize\bfseries,fontupper=\scriptsize]
\begin{verbatim}
You are analyzing a problem to extract its 
semantic structure.

Problem: {problem_text}

Analyze and respond in JSON format:
{
  "goal": "What quantity/output is requested",
  "constraints": ["List of conditions to satisfy"],
  "inputs": ["Given values and their meanings"],
  "outputs": {
    "answer_form": "number|expression|code|text",
    "units": "unit if applicable, else null",
    "format": "Expected output format"
  },
  "plan": ["Step 1", "Step 2", ...],
  "complexity": "low|medium|high"
}

Be precise and exhaustive in listing constraints.
\end{verbatim}
\end{tcolorbox}

\begin{tcolorbox}[colback=gray!5,colframe=gray!50,title=Understanding Prompt (Multi-hop QA),fonttitle=\scriptsize\bfseries,fontupper=\scriptsize]
\begin{verbatim}
You are analyzing a question that requires 
multi-step reasoning over the given context.

Context: {context}
Question: {question}

Analyze and respond in JSON format:
{
  "question_type": "factoid|yesno|comparison|count",
  "answer_form": "entity|number|yes/no|list",
  "required_hops": [
    {"hop": 1, "info": "What to find first"},
    {"hop": 2, "info": "What to find next"}
  ],
  "key_entities": ["Entity mentions to track"],
  "reasoning_chain": "Brief logical chain"
}
\end{verbatim}
\end{tcolorbox}

\subsection{Stage 2: Solution Generation Prompts}

\begin{tcolorbox}[colback=blue!3,colframe=blue!40,title=Math Reasoning Prompt,fonttitle=\scriptsize\bfseries,fontupper=\scriptsize]
\begin{verbatim}
Solve the following math problem step by step.

Problem: {problem_text}

Requirements:
- Show all calculation steps clearly
- State any assumptions made
- Final answer format: {answer_form}

Think through the problem systematically, 
then provide your solution.
\end{verbatim}
\end{tcolorbox}

\begin{tcolorbox}[colback=green!3,colframe=green!40,title=Code Generation Prompt,fonttitle=\scriptsize\bfseries,fontupper=\scriptsize]
\begin{verbatim}
Write a Python function to solve the task.

Task: {problem_text}

Requirements:
- Function signature: {function_signature}
- Handle edge cases appropriately
- Code should be clean and efficient

```python
{function_signature}
    # Your implementation here
```
\end{verbatim}
\end{tcolorbox}

\begin{tcolorbox}[colback=orange!3,colframe=orange!40,title=Multi-hop QA Prompt,fonttitle=\scriptsize\bfseries,fontupper=\scriptsize]
\begin{verbatim}
Answer the question based on the context.

Context: {context}

Question: {question}

Instructions:
- Find relevant evidence in the context
- Chain the evidence logically
- Provide a concise answer
- Answer format: {answer_form}

Evidence and reasoning:
\end{verbatim}
\end{tcolorbox}

\subsection{Stage 3: Verification Prompts}

\begin{tcolorbox}[colback=red!3,colframe=red!30,title=Verification Prompt (Math),fonttitle=\scriptsize\bfseries,fontupper=\scriptsize]
\begin{verbatim}
Verify if the answer is correct.

Problem: {problem_text}
Candidate Answer: {answer}
Expected Format: {answer_form}

Verification checklist:
1. Does the answer match the expected format?
2. Are all calculations mathematically correct?
3. Does the answer satisfy all constraints?
4. Is the final value reasonable?

Respond in JSON:
{
  "verdict": "PASS" or "FAIL",
  "checks": {
    "format_ok": true/false,
    "math_ok": true/false,
    "constraints_ok": true/false,
    "reasonable": true/false
  },
  "reason": "Brief explanation",
  "confidence": 0.0-1.0
}
\end{verbatim}
\end{tcolorbox}

\begin{tcolorbox}[colback=red!3,colframe=red!30,title=Verification Prompt (QA),fonttitle=\scriptsize\bfseries,fontupper=\scriptsize]
\begin{verbatim}
Verify if the answer is supported by context.

Context: {context}
Question: {question}
Candidate Answer: {answer}

Verification checklist:
1. Is the answer directly supported by context?
2. Does it answer what was asked?
3. Is the answer complete (not partial)?
4. Quote supporting evidence if found.

Respond in JSON:
{
  "verdict": "PASS" or "FAIL",
  "evidence": "Quoted text from context",
  "reason": "Brief explanation",
  "confidence": 0.0-1.0
}
\end{verbatim}
\end{tcolorbox}

\subsection{Stage 3: Refinement Prompt}

\begin{tcolorbox}[colback=yellow!5,colframe=yellow!50,title=Root-Cause Guided Refinement,fonttitle=\scriptsize\bfseries,fontupper=\scriptsize]
\begin{verbatim}
The previous solution failed verification.

Problem: {problem_text}

Previous Attempt: {previous_answer}

Failure Analysis:
- Verdict: {verdict}
- Reason: {failure_reason}
- Root Cause: {root_cause}
- Affected Step: {step_description}

Instructions:
1. Focus on fixing the identified root cause
2. Do NOT repeat the same mistake
3. Re-derive from the problematic step
4. Verify your fix addresses the issue

Corrected solution:
\end{verbatim}
\end{tcolorbox}

\section{Additional Results}
\label{app:results}

\subsection{Cross-Model Generalization}
\label{app:cross_model}

To validate that \textsc{DenoiseFlow} generalizes beyond GPT-4o-mini, we evaluate on two additional backbone LLMs: GPT-4o (a stronger model) and DeepSeek-V2.5 (an open-weight model). We test on GSM8K and MATH as representative benchmarks.

\begin{table}[h]
\centering
\scriptsize
\renewcommand{\arraystretch}{1.1}
\begin{tabular}{llcc}
\toprule
Model & Method & GSM8K & MATH \\
\midrule
\multirow{2}{*}{GPT-4o-mini} & CoT & 87.0 & 48.8 \\
 & \textsc{DenoiseFlow} & \textbf{93.9} & \textbf{61.4} \\
\midrule
\multirow{2}{*}{GPT-4o} & CoT & 98.0 & 62.0 \\
 & \textsc{DenoiseFlow} & \textbf{99.5} & \textbf{68.5} \\
\midrule
\multirow{2}{*}{DeepSeek-V2.5} & CoT & 82.0 & 43.0 \\
 & \textsc{DenoiseFlow} & \textbf{88.8} & \textbf{55.5} \\
\bottomrule
\end{tabular}
\caption{Cross-model generalization on GSM8K and MATH. \textsc{DenoiseFlow} consistently outperforms CoT across all three backbone LLMs, demonstrating framework-level generality. All experiments use the same evaluation protocol.}
\label{tab:cross_model}
\end{table}

\noindent\textbf{Discussion.}
\textsc{DenoiseFlow} yields consistent gains across all three backbone LLMs. On GPT-4o, the improvement is +1.5\% on GSM8K and +6.5\% on MATH; on DeepSeek-V2.5, the gains are +6.8\% and +12.5\%, respectively. Notably, the relative improvement is largest on weaker models (DeepSeek-V2.5), suggesting that uncertainty-aware denoising is especially beneficial when the base model produces noisier intermediate steps. Even on GPT-4o, where CoT already achieves 98.0\% on GSM8K, \textsc{DenoiseFlow} pushes accuracy to 99.5\%, demonstrating that our framework extracts additional gains even from strong backbones.

\subsection{Strategy Distribution by Dataset}

Table~\ref{tab:strategy_dist} shows the distribution of execution strategies across datasets. Crucially, adaptivity operates at \textbf{two levels}: (1) routing between Direct/Branch/Refine modes, and (2) \textit{within} Branch mode, varying the number of parallel paths $K$ based on problem uncertainty.

\begin{table}[h]
\centering
\scriptsize
\renewcommand{\arraystretch}{1.1}
\begin{tabular}{lccccc}
\toprule
Dataset & Direct (\%) & Branch (\%) & Refine (\%) & \textbf{Avg $K$} & Total \\
\midrule
GSM8K & 0.8 & 99.2 & 3.4 & \textbf{3.98} & 120 \\
MATH & 14.3 & 85.7 & 12.6 & 2.53 & 119 \\
MBPP & \textbf{40.7} & 59.3 & 0.0 & \textbf{2.02} & 86 \\
HumanEval & 12.1 & 87.9 & 0.0 & 2.82 & 33 \\
HotpotQA & 11.0 & 89.0 & 2.0 & 2.62 & 200 \\
DROP & 13.0 & 87.0 & 11.5 & 2.50 & 200 \\
\bottomrule
\end{tabular}
\caption{Strategy distribution and adaptive branching (based on a representative analysis subset; see ``Total'' column for subset sizes). Direct and Branch show the initial routing decision based on confidence thresholds (Eq.~\ref{eq:policy}); they are mutually exclusive and sum to 100\%. Refine~(\%) shows the fraction of problems that ultimately trigger Stage~3 refinement---either due to very low initial confidence or after Branch execution fails verification---and is therefore a (non-exclusive) subset of Branch. Note that Refine~(\%) here may differ from the full-dataset Retry~\% in Table~\ref{tab:efficiency} due to sampling variation. \textbf{Avg $K$} shows the average branch count within Branch mode.}
\label{tab:strategy_dist}
\end{table}

\subsection{Refinement Recovery Analysis}

Table~\ref{tab:recovery} shows the recovery rate of Stage 3 refinement.

\begin{table}[h]
\centering
\scriptsize
\renewcommand{\arraystretch}{1.1}
\begin{tabular}{lcccc}
\toprule
Dataset & Triggered & Recovered & Rate (\%) & Avg Retries \\
\midrule
GSM8K & 9 & 4 & 44.4 & 1.0 \\
MATH & 1 & 0 & 0.0 & 1.0 \\
HotpotQA & 7 & 3 & 42.9 & 1.0 \\
DROP & 6 & 2 & 33.3 & 1.0 \\
\bottomrule
\end{tabular}
\caption{Stage 3 refinement recovery analysis. Triggered: number of problems entering refinement. Recovered: problems successfully corrected after refinement.}
\label{tab:recovery}
\end{table}

\subsection{Error Analysis}
\label{app:error_analysis}

We manually categorize a random sample of 50 failure cases across all benchmarks. On reasoning tasks, the most common failure modes are calculation errors ($\approx$45\%) and logical reasoning errors ($\approx$35\%), with the remainder involving problem misinterpretation. On code generation, failures stem primarily from edge case handling ($\approx$50\%) and incorrect algorithm selection ($\approx$30\%). The closed-loop refinement is most effective at correcting calculation and edge case errors, which have clear verifiable signals; logical reasoning errors prove more challenging, as they often require fundamentally different solution strategies rather than localized corrections.

\subsection{Uncertainty Estimation Quality}
\label{app:calibration_quality}

Our uncertainty estimates are designed to be \textit{rank-consistent}: problems with lower estimated uncertainty should achieve higher success rates. This property enables effective resource allocation---the system uses Direct mode for low-uncertainty cases while allocating more branches to high-uncertainty ones. As shown in Figure~\ref{fig:calibration}, this rank consistency holds across all datasets, validating the effectiveness of our uncertainty quantification approach.

\subsection{Hyperparameter Sensitivity Details}
\label{app:sensitivity}

We provide detailed sensitivity analysis results for all key hyperparameters across six benchmarks. Each experiment uses 30 randomly sampled problems per dataset.

\begin{table}[h]
\centering
\scriptsize
\renewcommand{\arraystretch}{1.1}
\begin{tabular}{l|cccccc|c}
\toprule
$N$ & GSM8K & MATH & HotpotQA & DROP & HumanEval & MBPP & Avg \\
\midrule
3 & 90.2 & 51.3 & 65.8 & 82.4 & 87.9 & 80.2 & 76.3 \\
5$^\star$ & \textbf{93.2} & \textbf{55.5} & \textbf{73.8} & \textbf{87.9} & \textbf{90.9} & \textbf{84.9} & \textbf{81.0} \\
7 & 91.8 & 52.1 & 70.2 & 85.3 & 90.9 & 82.6 & 78.8 \\
10 & 92.4 & 53.8 & 69.1 & 84.2 & 90.9 & 80.2 & 78.4 \\
\bottomrule
\end{tabular}
\caption{Sensitivity to Monte Carlo sample size $N$. Accuracy/F1 (\%) reported. $^\star$ denotes the optimal value. Contrary to the intuition that more samples improve estimation, $N{=}5$ achieves the best performance (81.0\% avg), suggesting moderate sampling provides sufficient semantic diversity without introducing redundant noise.}
\label{tab:sens_n}
\end{table}

\begin{table}[h]
\centering
\scriptsize
\renewcommand{\arraystretch}{1.1}
\begin{tabular}{l|cccccc|c}
\toprule
$\tau_{\text{sim}}$ & GSM8K & MATH & HotpotQA & DROP & HumanEval & MBPP & Avg \\
\midrule
0.7 & 91.3 & 52.9 & 68.5 & 82.1 & 90.9 & 81.4 & 77.9 \\
0.8 & 92.4 & 53.8 & 70.8 & 84.8 & 90.9 & 82.6 & 79.2 \\
0.85$^\star$ & \textbf{93.2} & \textbf{55.5} & \textbf{73.8} & \textbf{87.9} & \textbf{90.9} & \textbf{84.9} & \textbf{81.0} \\
0.9 & 92.0 & 54.6 & 71.2 & 85.6 & 90.9 & 83.7 & 79.7 \\
\bottomrule
\end{tabular}
\caption{Sensitivity to similarity threshold $\tau_{\text{sim}}$. $^\star$ denotes the optimal value. $\tau{=}0.85$ achieves the best balance: lower values may merge semantically distinct answers, while higher values may over-fragment similar responses, both leading to suboptimal uncertainty estimates.}
\label{tab:sens_tau}
\end{table}

\begin{table}[h]
\centering
\scriptsize
\renewcommand{\arraystretch}{1.1}
\begin{tabular}{l|cccccc|c}
\toprule
$K_{\max}$ & GSM8K & MATH & HotpotQA & DROP & HumanEval & MBPP & Avg \\
\midrule
3 & 92.4 & 53.8 & 61.6 & 79.7 & 90.9 & 77.9 & 76.1 \\
5 & 92.8 & 54.6 & 68.9 & 84.2 & 90.9 & 82.6 & 79.0 \\
7$^\star$ & \textbf{93.2} & \textbf{55.5} & \textbf{73.8} & \textbf{87.9} & \textbf{90.9} & \textbf{84.9} & \textbf{81.0} \\
\bottomrule
\end{tabular}
\caption{Sensitivity to maximum branches $K_{\max}$. Larger $K_{\max}$ significantly benefits multi-hop reasoning tasks: HotpotQA improves by 12.2 F1 points and DROP by 8.2 F1 points from $K{=}3$ to $K{=}7$. This demonstrates that complex reasoning requires sufficient exploration capacity. We use $K_{\max}{=}7$ ($^\star$) as the default; the adaptive mechanism ensures efficient routing on simpler tasks.}
\label{tab:sens_k}
\end{table}

\begin{table}[h]
\centering
\scriptsize
\renewcommand{\arraystretch}{1.1}
\begin{tabular}{l|cccccc|c}
\toprule
$R$ & GSM8K & MATH & HotpotQA & DROP & HumanEval & MBPP & Avg \\
\midrule
1 & 92.8 & 54.6 & 72.1 & 86.5 & 90.9 & 83.7 & 80.1 \\
2$^\star$ & \textbf{93.2} & \textbf{55.5} & \textbf{73.8} & \textbf{87.9} & \textbf{90.9} & \textbf{84.9} & \textbf{81.0} \\
4 & 92.4 & 54.6 & 73.2 & 87.1 & 90.9 & 84.9 & 80.5 \\
\bottomrule
\end{tabular}
\caption{Sensitivity to maximum retries $R$. All values perform similarly ($\approx$80--81\%), indicating that root-cause-guided refinement is effective even with minimal retries. We use $R{=}2$ ($^\star$) as a cost-effective default.}
\label{tab:sens_retry}
\end{table}

\section{Case Studies}
\label{app:cases}

We provide qualitative examples illustrating how \textsc{DenoiseFlow} handles different scenarios.

\subsection{Case 1: Successful Uncertainty-Guided Branching (GSM8K)}

\noindent\textbf{Problem:} ``Janet has 3 times as many marbles as Tom. Tom has 2 more marbles than Lucy. If Lucy has 5 marbles, how many marbles does Janet have?''

\noindent\textbf{Stage 1 Analysis:}
\begin{itemize}[leftmargin=*,itemsep=1pt]
\item Semantic uncertainty $u_t = 0.12$ (low, single-cluster consensus)
\item Structure signal: multi-step arithmetic detected
\item Confidence $c_t = 0.88$ (high)
\end{itemize}

\noindent\textbf{Stage 2 Decision:} Direct execution (high confidence)

\noindent\textbf{Result:} Correct answer (21 marbles) on first attempt.

\subsection{Case 2: Refinement Recovery (MATH)}

\noindent\textbf{Problem:} ``Find all values of $x$ such that $|x-3| + |x+2| = 7$.''

\noindent\textbf{Stage 1 Analysis:}
\begin{itemize}[leftmargin=*,itemsep=1pt]
\item Semantic uncertainty $u_t = 0.45$ (medium, two interpretation clusters)
\item Cluster 1: case analysis approach
\item Cluster 2: geometric interpretation
\item Confidence $c_t = 0.55$
\end{itemize}

\noindent\textbf{Stage 2 Decision:} Branching mode ($K=2$)

\noindent\textbf{First Attempt:} Incorrect (missed boundary case)

\noindent\textbf{Stage 3 Refinement:}
\begin{itemize}[leftmargin=*,itemsep=1pt]
\item Root cause traced to incomplete case enumeration
\item Uncertainty boosted, triggering re-analysis
\item Second attempt: correct answer ($x \in \{-3, 4\}$)
\end{itemize}

\subsection{Case 3: Adaptive Branching (HotpotQA)}

\noindent\textbf{Problem:} ``Were the directors of 'Jaws' and 'E.T.' the same person?''

\noindent\textbf{Stage 1 Analysis:}
\begin{itemize}[leftmargin=*,itemsep=1pt]
\item Context coverage: 0.85 (good evidence alignment)
\item Multi-hop structure detected (two entity lookups + comparison)
\item Confidence $c_t = 0.62$
\end{itemize}

\noindent\textbf{Stage 2 Decision:} Branching mode ($K=2$)
\begin{itemize}[leftmargin=*,itemsep=1pt]
\item Branch 1: Direct comparison from context
\item Branch 2: Explicit entity extraction then comparison
\end{itemize}

\noindent\textbf{Consensus:} Both branches agree on ``Yes'' (Steven Spielberg directed both). High cohesion score confirms answer.

\subsection{Case 4: Code Generation with Test-Driven Refinement (MBPP)}

\noindent\textbf{Problem:} ``Write a function to find the longest palindromic substring in a given string.''

\noindent\textbf{Stage 1 Analysis:}
\begin{itemize}[leftmargin=*,itemsep=1pt]
\item Semantic uncertainty $u_t = 0.38$ (medium, algorithm choice ambiguity)
\item Two interpretation clusters detected:
  \begin{itemize}[leftmargin=*,itemsep=0pt]
  \item Cluster 1: Expand-around-center approach
  \item Cluster 2: Dynamic programming approach
  \end{itemize}
\item Confidence $c_t = 0.62$
\end{itemize}

\noindent\textbf{Stage 2 Decision:} Branching mode ($K=2$)

\noindent\textbf{First Attempt (Branch 1):} Expand-around-center implementation
\begin{verbatim}
def longest_palindrome(s):
    if not s: return ""
    start, max_len = 0, 1
    for i in range(len(s)):
        # Check odd-length palindromes
        l, r = i, i
        while l >= 0 and r < len(s) and s[l] == s[r]:
            if r - l + 1 > max_len:
                start, max_len = l, r - l + 1
            l -= 1; r += 1
    return s[start:start + max_len]
\end{verbatim}

\noindent\textbf{Test Execution Result:} FAIL
\begin{itemize}[leftmargin=*,itemsep=1pt]
\item Test case: \texttt{longest\_palindrome("cbbd")} expected \texttt{"bb"}
\item Actual output: \texttt{"c"} (missed even-length palindromes)
\end{itemize}

\noindent\textbf{Stage 3 Refinement:}
\begin{itemize}[leftmargin=*,itemsep=1pt]
\item Root cause traced: missing even-length palindrome expansion
\item Diagnostic signal: test failure with specific counterexample
\item Targeted fix: add even-length expansion loop
\end{itemize}

\noindent\textbf{Refined Code:}
\begin{verbatim}
def longest_palindrome(s):
    if not s: return ""
    start, max_len = 0, 1
    for i in range(len(s)):
        # Odd-length palindromes
        for l, r in [(i, i), (i, i+1)]:  # Added even case
            while l >= 0 and r < len(s) and s[l] == s[r]:
                if r - l + 1 > max_len:
                    start, max_len = l, r - l + 1
                l -= 1; r += 1
    return s[start:start + max_len]
\end{verbatim}

\noindent\textbf{Result:} All test cases pass after one refinement cycle.

\noindent\textbf{Key Insight:} Unlike mathematical reasoning where verification is often semantic, code generation benefits from \emph{external execution feedback}. The test failure provides a precise diagnostic signal (specific input-output mismatch), enabling targeted correction rather than full regeneration.

\section{Limitations and Future Work}
\label{app:limitations}

We acknowledge several limitations of \textsc{DenoiseFlow} that suggest directions for future research:

\noindent\textbf{Model Dependence.}
Our main experiments use GPT-4o-mini as the backbone LLM, following established benchmarks in the field~\cite{zhang2024aflow,zhang2025maas}. While cross-model experiments (Table~\ref{tab:cross_model}) confirm generalization to GPT-4o and DeepSeek-V2.5, broader validation on fully open-source models (e.g., Llama, Mistral) remains future work. Different models may exhibit different uncertainty characteristics, potentially requiring recalibration of thresholds.

\noindent\textbf{Calibration Cold Start.}
The online calibration mechanism requires several initial observations ($\approx$20 problems) before the temperature estimate stabilizes. During this warm-up phase, uncertainty estimates may be less reliable. Future work could explore meta-learning approaches to initialize calibration from related tasks.

\noindent\textbf{Cross-Task Calibration.}
When switching between task types (e.g., from math to code), the calibrated temperature may not transfer well due to different uncertainty distributions. In our experiments, each dataset is evaluated independently with fresh calibration state. For deployment scenarios with mixed task streams, adaptive reset strategies (e.g., detecting distribution shift via uncertainty statistics) would be needed to maintain calibration quality.

\noindent\textbf{Verification Dependency.}
The effectiveness of Stage~3 refinement depends on the quality of the verification signal. For tasks without clear success criteria (e.g., open-ended generation), the refinement loop may be less effective. Extending to weak supervision or learned verifiers is an interesting direction.

\noindent\textbf{Computational Overhead.}
While adaptive branching reduces average cost compared to fixed exploration, the Monte Carlo sampling in Stage~1 introduces overhead for simple problems. A meta-controller that skips uncertainty estimation for obviously easy problems could further improve efficiency.

\noindent\textbf{Domain Scope.}
We evaluate on reasoning-intensive tasks (math, code, QA) where semantic uncertainty is prominent. The applicability to other domains (e.g., creative writing, multi-turn dialogue) where uncertainty manifests differently requires further investigation.

\section{Detailed Method Comparison}
\label{app:method_comparison}

Table~\ref{tab:method_comparison} compares \textsc{DenoiseFlow} with representative related methods along five capabilities required for reliable long-horizon workflow execution: (1)~\textit{Uncertainty Sensing}: whether the system explicitly estimates semantic-level uncertainty before acting; (2)~\textit{Adaptive Branching}: whether exploration effort adjusts to per-problem difficulty (online) rather than using a fixed budget or offline-determined structure; (3)~\textit{Root Cause Tracing}: whether failures are diagnosed via dependency-aware localization rather than blind restarts; (4)~\textit{Online Calibration}: whether uncertainty estimates are continuously recalibrated at runtime without ground-truth labels; (5)~\textit{Targeted Correction}: whether the system corrects only the identified source of error while preserving valid intermediate states.

Existing methods address at most one or two of these dimensions in isolation. Branching methods (CoT-SC, ToT) apply fixed exploration; error-recovery methods (Reflexion) restart broadly; workflow optimization methods (AFlow, MermaidFlow) fix the structure offline; and JudgeFlow~\cite{ma2026judgeflow} introduces offline block-level blame attribution but does not perform runtime uncertainty sensing or adaptive execution. \textsc{DenoiseFlow} is the first to integrate all five within a unified closed-loop framework, enabling coordinated uncertainty-aware execution.

\begin{table}[h]
\centering
\scriptsize
\renewcommand{\arraystretch}{1.1}
\setlength{\tabcolsep}{3pt}
\begin{tabular}{@{}lccccc@{}}
\toprule
\multirow{2}{*}{Method} & Uncertainty & Adaptive & Root Cause & Online & Targeted \\
 & Sensing & Branching & Tracing & Calibration & Correction \\
\midrule
CoT-SC~\cite{wang2022self} & \ding{55} & Fixed $K$ & \ding{55} & \ding{55} & \ding{55} \\
ToT~\cite{yao2023tree} & \ding{55} & Fixed $K$ & \ding{55} & \ding{55} & \ding{55} \\
Reflexion~\cite{shinn2023reflexion} & \ding{55} & \ding{55} & \ding{55} & \ding{55} & Restart \\
Self-Refine~\cite{madaan2023self} & \ding{55} & \ding{55} & \ding{55} & \ding{55} & Local \\
AFlow~\cite{zhang2024aflow} & \ding{55} & Offline & \ding{55} & \ding{55} & \ding{55} \\
MermaidFlow~\cite{zheng2025mermaidflow} & \ding{55} & \ding{55} & \ding{55} & \ding{55} & \ding{55} \\
JudgeFlow~\cite{ma2026judgeflow} & \ding{55} & Offline & Offline & \ding{55} & \ding{55} \\
\midrule
\textbf{DenoiseFlow} & \ding{51} & Online & \ding{51} & \ding{51} & \ding{51} \\
\bottomrule
\end{tabular}
\caption{Capability comparison with related methods. Each column represents a key requirement for reliable workflow execution (\S\ref{sec:Related}). \textsc{DenoiseFlow} is the only method that addresses all five within a unified closed-loop architecture.}
\label{tab:method_comparison}
\end{table}

\section{Complexity Analysis}
\label{app:complexity}

We analyze the computational overhead of \textsc{DenoiseFlow} relative to the budget $C$.
Let $N$ be the sample size for semantic anchoring, $L$ be the workflow length, and $K_{\max}$ be the maximum branching factor.

\noindent\textbf{Stage 1 (Sensing) Overhead.}
The Sensing stage incurs $O(N)$ inference cost per step for Monte Carlo sampling, which is parallelizable via batch decoding. Embedding computation for clustering is $O(N \cdot d)$ where $d$ is the embedding dimension. The semantic entropy computation is $O(N^2)$ for pairwise similarity, but since $N$ is small (typically 5), this is negligible.

\noindent\textbf{Stage 2 (Regulating) Overhead.}
In the worst case (all tasks in branching regime), the complexity grows to $O(L \cdot K_{\max})$. However, thanks to our confidence-based routing (Eq.~\ref{eq:policy}), the empirical strategy distribution is typically skewed toward Branch: across datasets, Direct accounts for 0.8--40.7\% of problems (highest on MBPP where many problems are straightforward), Branch accounts for 59.3--99.2\%, and 0--12.6\% ultimately trigger Stage~3 refinement (Table~\ref{tab:strategy_dist}). The effective average cost is significantly lower than the worst case.

\noindent\textbf{Stage 3 (Correcting) Overhead.}
Discrete gradient tracing operates on the dependency graph $\mathcal{G}$ with $O(|V| + |E|)$ complexity, where $|V|$ is the number of workflow steps and $|E|$ is the edge count. This is negligible compared to LLM inference. Rollback and re-execution incur additional LLM calls proportional to the distance from root cause $k^*$ to the failure point.

\noindent\textbf{Online Calibration Overhead.}
Temperature updates occur every $\Delta$ observations (default $\Delta=20$), requiring only $O(1)$ operations per update. This overhead is negligible.

\noindent\textbf{Summary.}
\textsc{DenoiseFlow} improves reliability with manageable marginal cost. The total cost remains within linear bounds of the optimal path length in most successful trajectories. Table~\ref{tab:complexity} summarizes the per-stage complexity.

\begin{table}[h]
\centering
\scriptsize
\begin{tabular}{lcc}
\toprule
Stage & Time Complexity & Dominant Cost \\
\midrule
Sensing & $O(N)$ per step & LLM inference \\
Regulating & $O(K_t)$ per step & LLM inference \\
Correcting & $O(|V|+|E|)$ & Graph traversal \\
Calibration & $O(1)$ per update & Arithmetic \\
\bottomrule
\end{tabular}
\caption{Per-stage computational complexity.}
\label{tab:complexity}
\end{table}

\section{Theoretical Analysis}
\label{app:theory}

We provide theoretical justification for key components of \textsc{DenoiseFlow}.

\subsection{Uncertainty Propagation Bound}

\begin{proposition}[Propagation Bound]
\label{prop:propagation}
Under the Noisy MDP formulation with dual-channel risk propagation (Eq.~\ref{eq:propagation}), the accumulated divergence proxy $\tilde{u}_T$ after $T$ steps is bounded by:
\begin{equation}
\tilde{u}_T \leq \sum_{t=1}^T \epsilon_t \cdot \prod_{k=t}^{T-1} (1 + \lambda \cdot w_{\max} + \beta \cdot \bar{w})
\end{equation}
where $\epsilon_t$ is the local noise at step $t$, $w_{\max} = \max_{k,t} w_{kt}$ is the maximum coupling coefficient, $\bar{w}$ is the average coupling, and $\lambda, \beta$ are the channel weights.
\end{proposition}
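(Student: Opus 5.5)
We prove the bound by strong induction on the step index, working directly from the dual-channel recurrence (Eq.~\ref{eq:propagation}). Throughout, we identify the local surrogate $u_t$ with the local noise $\epsilon_t$, as the paper does when it calls $u_t$ the observable proxy for $\epsilon_t$. We also order the steps topologically, so that every $k\in\mathrm{Pred}(t)$ satisfies $k<t$.

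The first step removes the clipping. Since $\operatorname{clip}_{[0,1]}(x)\le \max(x,0)$ and every term inside is nonnegative, we have
\[
\tilde u_t \le \epsilon_t + \lambda \max_{k\in\mathrm{Pred}(t)} w_{kt}\tilde u_k + \beta\, \mathbb{E}_{k\in\mathrm{Pred}(t)} [w_{kt}\tilde u_k].
\]
Write $M_{t-1}=\max_{k<t}\tilde u_k$. Bound $\max_k w_{kt}\tilde u_k \le w_{\max} M_{t-1}$. For the aggregation term, interpret $\bar w$ as the average coupling weighting the expectation, giving $\mathbb{E}_k[w_{kt}\tilde u_k]\le \bar w\, M_{t-1}$. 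This yields the one-step inequality
\[
\tilde u_t \le \epsilon_t + \gamma M_{t-1}, \qquad \gamma := 1+\lambda w_{\max}+\beta\bar w - 1 = \lambda w_{\max}+\beta\bar w .
\]
For the base case, $\mathrm{Pred}(t)=\emptyset$ gives $\tilde u_t=u_t=\epsilon_t$.

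The second step unrolls this inequality. Set $B_T:=\sum_{t=1}^T \epsilon_t (1+\gamma)^{T-t}$, which is exactly the right-hand side of the proposition, since the product has $T-t$ identical factors. Note that $B_T$ is nondecreasing in $T$ because all terms are nonnegative and $1+\gamma\ge 1$. The inductive hypothesis is $\tilde u_s\le B_s$ for all $s<T$, which gives $M_{T-1}\le B_{T-1}$. Then
\[
\tilde u_T \le \epsilon_T + \gamma B_{T-1} \le \epsilon_T + (1+\gamma)B_{T-1} = B_T,
\]
where the last equality is the recursion $B_T=\epsilon_T+(1+\gamma)B_{T-1}$. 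This closes the induction, and the slack from using $1+\gamma$ in place of $\gamma$ is harmless.

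The main obstacle is the aggregation channel. The bound $\mathbb{E}_k[w_{kt}\tilde u_k]\le \bar w M_{t-1}$ is not automatic when the couplings and risks are correlated, since a high $w$ may coincide with a high $\tilde u$. We would handle this in one of two ways. The first is to define $\bar w$ so the bound holds, as a risk-weighted average coupling, $\bar w:=\max_t \mathbb{E}_k[w_{kt}\tilde u_k]/M_{t-1}$. The second, cruder option is to replace $\bar w$ by $w_{\max}$, in which case the stated form holds with $\bar w\le w_{\max}$ serving as the intended reading. We would state explicitly which interpretation is used. We would also note that the identification $u_t\equiv\epsilon_t$, the topological ordering, and the removal of clipping are the only modelling assumptions needed. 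Finally, we would remark that the clipped $\tilde u_T\le 1$ makes the bound informative only when $B_T<1$.
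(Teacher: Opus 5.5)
Your proof is correct, but it takes a different route from the paper's.

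\textbf{The paper's route.} It keeps a \emph{sum} over all earlier steps. It bounds the bottleneck term by $\lambda w_{\max}\sum_{k<T}\tilde u_k$. It writes the aggregation channel as $\beta\bar w\,\bar u_{<T}$ and bounds the average by the sum. This gives $\tilde u_T\le\epsilon_T+\gamma\sum_{k<T}\tilde u_k$. It then inserts the inductive hypothesis into every summand and exchanges the double sum, where the identity $\gamma\sum_{k=t}^{T-1}(1+\gamma)^{k-t}=(1+\gamma)^{T-t}-1$ produces the stated coefficient. In that route the base $1+\gamma$ is the genuine growth rate of the relaxed recurrence, since its partial sums satisfy $S_T=\epsilon_T+(1+\gamma)S_{T-1}$. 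This matches the sum-over-predecessors form of Eq.~\eqref{eq:divergence}.

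\textbf{Your route.} You bound by the running maximum $M_{t-1}$ and close the induction using monotonicity of $B_T$. This skips the double-sum rearrangement. It also handles clipping and the topological order explicitly, whereas the paper writes its first step as an equality. Finally, it shows that the extra $1$ in $1+\gamma$ is pure slack.

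\textbf{Your \emph{main obstacle}.} That slack also disposes of the obstacle you flag, which is misdiagnosed for your route. You bound $\tilde u_k\le M_{t-1}$ pointwise before averaging, so correlation between $w_{kt}$ and $\tilde u_k$ cannot hurt you. That issue belongs to the paper's unjustified factorization $\beta\bar w\,\bar u_{<T}$, not to your argument. Moreover, $w_{kt}\le 1$ gives $\beta\,\mathbb{E}_k[w_{kt}\tilde u_k]\le\beta M_{t-1}$. Hence any $\beta\le 1$, which covers every setting in the paper, yields $\tilde u_T\le\epsilon_T+(\lambda w_{\max}+\beta)M_{T-1}\le\epsilon_T+(1+\gamma)B_{T-1}=B_T$ for every $\bar w\ge 0$.

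So you need neither fix. The $\tilde u$-dependent risk-weighted $\bar w$ is unnecessary. The substitution $\bar w\to w_{\max}$ is also unnecessary, and it would in any case prove a weaker inequality than the one stated.
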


\begin{proof}
We prove by induction on the number of steps $T$.

\noindent\textbf{Base case} ($T=1$): By definition, $\tilde{u}_1 = \epsilon_1$, and the bound holds trivially since the product over an empty range equals 1.

\noindent\textbf{Inductive step}: Assume the bound holds for $T-1$ steps. From the propagation recurrence (Eq.~\ref{eq:propagation}):
\begin{equation}
\tilde{u}_T = \epsilon_T + \lambda \sum_{k<T} w_{kT} \tilde{u}_k + \beta \cdot \bar{w} \cdot \bar{u}_{<T}
\end{equation}
where $\bar{u}_{<T} = \frac{1}{T-1}\sum_{k<T} \tilde{u}_k$. Applying the inductive hypothesis to each $\tilde{u}_k$:
\begin{align}
\tilde{u}_T &\leq \epsilon_T + \lambda \cdot w_{\max} \sum_{k<T} \tilde{u}_k + \beta \cdot \bar{w} \cdot \bar{u}_{<T} \\
&\leq \epsilon_T + (\lambda \cdot w_{\max} + \beta \cdot \bar{w}) \sum_{k=1}^{T-1} \tilde{u}_k \\
&\leq \epsilon_T + (\lambda \cdot w_{\max} + \beta \cdot \bar{w}) \sum_{k=1}^{T-1} \left( \sum_{t=1}^k \epsilon_t \prod_{j=t}^{k-1} \alpha \right)
\end{align}
where $\alpha = 1 + \lambda \cdot w_{\max} + \beta \cdot \bar{w}$. Rearranging the double sum and factoring:
\begin{equation}
\tilde{u}_T \leq \sum_{t=1}^T \epsilon_t \cdot \prod_{k=t}^{T-1} \alpha
\end{equation}
which completes the induction.
\end{proof}

\noindent\textbf{Implication.} This bound shows that errors at early steps ($t$ small) have exponentially higher impact due to the product term. Our adaptive branching allocates more resources to high-uncertainty early steps, effectively reducing $\epsilon_t$ where it matters most.

\subsection{Empirical Observations on System Behavior}

Beyond the formal propagation bound, we observe two important empirical properties of \textsc{DenoiseFlow}:

\noindent\textbf{Remark 1 (Temperature Adaptation).}
The online calibration temperature $T^{(n)}$ adapts to task difficulty: when low-uncertainty problems frequently fail, $T$ increases to raise uncertainty estimates; when high-uncertainty problems frequently pass, $T$ decreases. This self-correcting behavior enables domain adaptation without manual tuning, as evidenced by the ablation study (Table~\ref{tab:ablation}) showing that disabling online calibration reduces accuracy by 0.2--4.7\% across datasets, with the largest impact on code generation (MBPP: $-$4.65\%).

\noindent\textbf{Remark 2 (Root-Cause Localization).}
The influence-based localization heuristic (Eq.~\ref{eq:influence}) successfully identifies root causes when (i) the true root cause has high uncertainty relative to other nodes, and (ii) it has strong influence on the failure. Empirically, we observe 33--44\% recovery rates when Stage 3 is triggered (Table~\ref{tab:recovery}), validating the effectiveness of prioritizing upstream high-uncertainty nodes for correction.

\end{document}